\documentclass[lettersize,onecolumn]{IEEEtran}
\usepackage{amsmath,amsfonts}
\usepackage{algorithmic}
\usepackage{algorithm}
\usepackage{array}
\usepackage[caption=false,font=normalsize,labelfont=sf,textfont=sf]{subfig}
\usepackage{textcomp}
\usepackage{stfloats}
\usepackage{bbold}
\usepackage{url}
\usepackage{graphicx}
\usepackage{subcaption}
\usepackage{verbatim}

\usepackage{cite}
\usepackage{algorithm}
\usepackage{algorithmic}

\usepackage{amsfonts,amsmath,amsthm, mathtools}
\usepackage{amssymb}
\usepackage{caption}
\usepackage{verbatim}
\usepackage{mathrsfs}
\usepackage{algorithm}
\usepackage{algorithmic}
\usepackage{comment}
\usepackage{multirow, tabularx, booktabs}
\usepackage[dvipsnames]{xcolor}

\usepackage{hyperref}
\usepackage{dsfont}

\newtheorem{theorem}{Theorem}

\newtheorem{lemma}[theorem]{Lemma}

\newtheorem{definition}[theorem]{Definition}

\usepackage{fullpage}
\usepackage{xspace}

\numberwithin{equation}{section}

\theoremstyle{plain}
\newtheorem{thm}{Theorem}[section]
\newtheorem*{theorem*}{Theorem}

\theoremstyle{definition}
\newtheorem{defn}{Definition}[section]
\newtheorem{rem}{Remark}[section]
\usepackage[margin=1in]{geometry}

\newcommand{\rr}{\mathbb{R}}

\newcommand{\pp}{\mathcal{P}}
\newcommand{\hh}{\mathcal{H}}

\newcommand{\mm}{\mathcal{M}}

\newcommand{\xl}{\mathcal{X}}
\newcommand{\yl}{\mathcal{Y}}
\newcommand{\ec}{\mathcal{E}}
\newcommand{\ee}{\mathbb{E}}

\newcommand{\dd}{\mathbb{D}_c}

\newcommand{\xx}{\mathcal{X}}

\hyphenation{op-tical net-works semi-conduc-tor IEEE-Xplore}

\begin{document}

\title{Transformers Are Universally Consistent}

\author{
    \IEEEauthorblockN{Sagar Ghosh\IEEEauthorrefmark{1}, Kushal Bose\IEEEauthorrefmark{1}, and Swagatam Das\IEEEauthorrefmark{1}, } \\
    \IEEEauthorblockA{\IEEEauthorrefmark{1}Indian Statistical Institute
    \\sagarghosh1729@gmail.com}, {kushalbose92@gmail.com} {swagatam.das@isical.ac.in}
    \\
}



\maketitle

\begin{abstract}

Despite their central role in the success of foundational models and large-scale language modeling, the theoretical foundations governing the operation of Transformers remain only partially understood. Contemporary research has largely focused on their representational capacity for language comprehension and their prowess in in-context learning, frequently under idealized assumptions such as linearized attention mechanisms. Initially conceived to model sequence-to-sequence transformations, a fundamental and unresolved question is whether Transformers can robustly perform functional regression over sequences of input tokens. This question assumes heightened importance given the inherently non-Euclidean geometry underlying real-world data distributions. In this work, we establish that Transformers equipped with softmax-based nonlinear attention are uniformly consistent when tasked with executing Ordinary Least Squares (OLS) regression, provided both the inputs and outputs are embedded in hyperbolic space. We derive deterministic upper bounds on the empirical error which, in the asymptotic regime, decay at a provable rate of $\mathcal{O}(t^{-1/2d})$, where $t$ denotes the number of input tokens and $d$ the embedding dimensionality. Notably, our analysis subsumes the Euclidean setting as a special case, recovering analogous convergence guarantees parameterized by the intrinsic dimensionality of the data manifold. These theoretical insights are corroborated through empirical evaluations on real-world datasets involving both continuous and categorical response variables.
\end{abstract}

\begin{IEEEkeywords}
Hyperbolic Space, Transformer, Universal Approximation, Statistical Consistency
\end{IEEEkeywords}

\section{Introduction}
\IEEEPARstart{T}{he} advent of the self-attention-based Transformer architecture~\cite{vaswani2017attention} has profoundly influenced the domains of natural language processing (NLP), computer vision, and speech recognition. Initially introduced as a sequence-to-sequence model, the Transformer has demonstrated remarkable efficacy in tasks such as machine translation~\cite{wang2019learning}, language modeling~\cite{bouschery2023augmenting}, text generation~\cite{zhang2023survey}, and question answering~\cite{shao2019transformer}. Its capacity to model token-to-token mappings with unprecedented performance has led to widespread adoption, extending to domains such as audio processing~\cite{dong2018speech, gulati2020conformer} and the chemical and biological sciences~\cite{rives2021biological, schwaller2019molecular}.

The standard Transformer architecture consists of two primary components: an encoder and a decoder. The encoder block is composed of a self-attention mechanism and a token-wise feed-forward network, while the decoder includes an additional masked attention mechanism. The encoder receives positional encodings alongside input token embeddings and utilizes the attention mechanism to compute context-sensitive weighted representations of each token. These representations are independently processed by the feed-forward layer. To address vanishing gradients and stabilize training dynamics, residual connections~\cite{gu2019improving} and layer normalization are incorporated within each sublayer. The decoder integrates cross-attention with masked multi-head attention and feed-forward layers, ensuring autoregressive constraints during generation by masking future tokens. We begin with a brief mathematical exposition of the attention and feed-forward mechanisms.

\subsubsection*{\textbf{Attention Mechanism}}

Each attention head in a multi-head attention layer employs trainable matrices—Queries ($Q$), Keys ($K$), and Values ($V$). Given a sequence of $t$ input tokens $\{X_i\}_{i=1}^t \subset \mathbb{R}^d$, the attention mechanism is defined as:
\begin{equation}
    \textrm{Attn}(Q,K,V) := \sigma(QK^\top)V,
\end{equation}
where $\sigma$ denotes the row-wise softmax operation. In practice, the dot product $QK^\top$ is typically scaled by $1/\sqrt{d}$. The resulting matrix $\sigma(QK^\top)$ is often referred to as the attention matrix. The outputs from all heads are concatenated and projected to form the final output of the attention layer.

The Transformer employs three forms of attention: self-attention in the encoder to capture contextual dependencies among input tokens; masked self-attention in the decoder to preserve autoregressive structure; and cross-attention in the decoder to condition on encoder outputs. For comprehensive discussions, we refer readers to~\cite{survey}.

\subsubsection*{\textbf{Feed-Forward Mechanism}}

The token-wise feed-forward network processes each token independently through a fully connected two-layer architecture with ReLU activation. The transformation is expressed as:
\begin{equation}
    \textrm{FF}(X) := U_2 \cdot \mathrm{ReLU}(U_1 \cdot \mathrm{Attn}(X) + b_1\mathbb{1}_t^\top) + b_2\mathbb{1}_t^\top,
\end{equation}
where $U_1, U_2$ are weight matrices, $b_1, b_2$ are bias vectors, and $\mathbb{1}_t$ is the $t$-dimensional vector of ones.

Despite their empirical success, the theoretical underpinnings of Transformer architectures remain largely unexplored. A foundational result in this direction is the universal approximation theorem for Transformers by Yun et al.~\cite{approx_tran}, which demonstrated that even a minimalist Transformer—with two single-dimensional attention heads and a hidden layer of width four—can approximate any sequence-to-sequence function with compact support on $\mathbb{R}^{d \times t}$, provided the positional encodings are incorporated. Their work formalized the notion of contextual embeddings, showing that the attention mechanism can compute token interactions while the feed-forward network approximates permutation-equivariant functions. Injecting positional information effectively lifts this equivariance constraint.

While the universal approximation result provides valuable insights into representational capacity, it leaves open the question of \emph{universal consistency}. In particular, we must precisely define what consistency means in the context of Transformer-based sequence modeling. Given a sequence-to-sequence regression task, the generalization error can be viewed as the expected output conditioned on the input sequence. One may then ask whether the empirical risk converges to the true risk, under appropriate regularization and truncation, as the sample size increases.

The notion of universal consistency has previously been studied for convolutional neural networks~\cite{lin}, employing a framework based on metric entropy, pseudo-dimension, and bounded sample analysis, followed by an appeal to the universal approximation property for 1D convolutional architectures. Inspired by this approach, we develop a parallel framework for Transformers by establishing regularity conditions under which the empirical error converges to the generalization error. 

Beyond the Euclidean case, we introduce a novel variant of the Transformer—termed the Hyperbolic Transformer (HyT)—built upon the Poincar\'e ball model. We prove that HyT exhibits universal consistency under analogous conditions, thereby extending the theory to data residing in non-Euclidean geometries.

\subsubsection*{\textbf{Our Contributions}}

\begin{itemize}
    \item We propose a hyperbolic analogue of the vanilla Transformer (HyT), constructed on the Poincar\'e ball and equipped with Hyperbolic Attention (HypAttn) and Hyperbolic Feed-Forward (HypFF) layers, designed to support theoretical analysis in non-Euclidean settings.
    \item We prove that HyT is universally consistent. Moreover, the consistency of the vanilla Transformer follows as a limiting case when the hyperbolic curvature parameter tends to zero.
    \item Our empirical evaluations demonstrate improved performance of HyT over vanilla Transformers during the pre-training phase across diverse datasets.
\end{itemize}

\section{Related Works}\label{sec:3}
\subsection*{\textbf{Attention Mechanisms and It's Imperatives}}
Following the self-attention mechanism described in \cite{vaswani2017attention}, numerous attempts have been made to devise various types of Attention Modules and their direct implementations to find contextual meaning in NLP tasks. The observation, that trained Transformer produces an Attention Matrix with a considerbale sparsity across data points (due to most of the tokens do not carry any contextual linking with most other tokens), gave rise to the concept of Sparse Attention Mechanism \cite{child2019generating}, for which the components of the Attention Matrix are not stored where there is no contextual relation among tokens. Following that, a number of Sparse Attention Mechanisms have been proposed till date: for example, Band Attention and Global Attention [see Star Transformer \cite{qipeng2019star}] , Inter Global Node Attention [see LongFormer, \cite{beltagy2020longformer}], External Global Node Attention [see BigBird, \cite{zaheer2020big}, \cite{ho2019axial}]. A block based splitting of queries and keys followed by a Sinkhorn normalization procedure to produce a doubly-stochastic assignment based Attention Matrix [see Sparse Sinkhorn Attention \cite{tay2020sparse}] can also facilitate the model to model locality. 

\subsection*{\textbf{Neural Networks and Universal Approximations}} 
One of the most classical results in the theory of Neural Networks is the universal approximation theorems [see \cite{cybenko1989approximation}, \cite{hornik1991approximation}]. They proved that any neural network with one hidden layer with sufficiently long width can arbitrarily approximate continuous functions of compact support. Several other works focused on bounding the depth of the hidden layer [see \cite{lu2017expressive}, \cite{lin2018resnet}]. Hanin and Selke \cite{clark2019does} proved that any neural network can approximate a scalar valued continuous function with hidden layer size $s+1$, provided the input layer has a size of $s$. Moreover, Zhou \cite{zhou2020universality} proved the universal approximation property in the context of a one-dimensional convolutional neural network. In 2019, Yun and others \cite{approx_tran} showed that Transformers having residual connection and softmax activations in their Attention Layer can act as universal approximators of sequence to sequence functions of compact support, provided the input tokens are injected with positional information. Additionally, Sannai \cite{sannai2019universal} showed the universality of permutation equivariant functions using fully connected deep neural networks with ReLU activations.

\section{Preliminaries}\label{sec:4}

This section discusses the preliminaries of Riemannian Manifolds and Hyperbolic Geometry which would underpin the introduction of our proposed framework. 

\vspace{5pt}
\noindent
\textbf{Riemannian manifold, Tangent space, and Geodesics}
An \textit{$n$-dimensional Manifold} $\mathcal{M}$ is a second-countable Hausdorff topological space that locally resembles $\rr^n$ \cite{tu}. For each point $x\in\mathcal{M}$, we can define the \textit{Tangent Space} $T_x(\mm)$ as the first-order linear approximation of $\mm$ at $x$. We call $\mm$ as a \textit{Reimannian Manifold} if there is a collection of metrics $g:=\{g_x:T_x(\mm)\times T_x(\mm)\to\rr , x\in\mm\}$ at every point of $\mm$ \cite{carmo}.  This metric induces a distance function between two points $p,q\in\mm$ joined by a piecewise smooth curve $\gamma:[a,b]\to\mm$ with $\gamma(a)=p, \gamma(b)=q$ and the distance between $p$ and $q$ is defined as $L(\gamma):=\int_{a}^bg_{\gamma(t)}(\gamma^\prime(t),\gamma^\prime(t))^{1/2}dt$. The notion of \textit{Geodesic} between two such points is meant to be that curve $\gamma$ for which $L(\gamma)$ attains the minimum and that $L(\gamma)$ is referred as the \textit{Geodesic Distance} between $p$ and $q$ in that case. Given such a Riemannian Manifold $\mm$ and two linearly independent vectors $u$ and $v$ at $T_x(\mm)$, we define the sectional curvature at $x$ as $k_x(u,v):=\frac{g_x(R(u,v)v,u)}{g_x(u,u)g_x(v,v)-g_x(u,v)^2}$, where $R$ being the Riemannian curvature tensor defined as $R(u,v)w:=\nabla_u\nabla_v w-\nabla_v\nabla_u w-\nabla_{(\nabla_uv-\nabla_vu)}w$ [$\nabla_uv$ is the directional derivative of $v$ in the direction of $u$, which is also known as the \textit{Rimannian Connection} on $\mm$. ] 

We use these notions to define an $n$-dimensional model \textit{Hyperbolic Space} as the connected and complete Riemannian Manifold with a constant negative sectional curvature.  There are various models in use for Hyperbolic Spaces, such as Poincar\'{e} Ball Model, Poincar\'{e} half Space Model, Klein-Beltrami Model, Hyperboloid Model, etc, but the celebrated \textit{Killing-Hopf Theorem} \cite{lang} asserts that for a particular curvature and dimension, all the model hyperbolic spaces are isometric. This allows us to develop our architecture uniquely (without worrying much about performance variations) over a particular model space, where we choose to work with the Poincar\'{e} Ball model for our convenience. Here, we have briefly mentioned the critical algebraic operations on this model required for our purpose.  

\vspace{5pt}
\noindent
\textbf{Poincar\'{e} Ball Model}
For a particular curvature $k(<0) [c=-k]$, an $n-$ dimensional Poincar\'{e} Ball model contains all of its points inside the ball of radius $1/\sqrt{c}$ embedded in $\rr^n$ \cite{lee}. The geodesics in this model are circular arcs perpendicular to the spherical surface of radius $1/\sqrt{c}$. The geodesic distance between two points $p$ and $q$ (where $\|p\|,\|q\|<1/\sqrt{c}$) is defined as
\begin{align}\label{eqn:metric}
    d(p,q):=2\sinh^{-1}\left(\sqrt{2\frac{\|p-q\|^2}{c(\frac{1}{c}-\|p\|^2)(\frac{1}{c}-\|q\|^2)}}\right).
\end{align}

From now on, we will denote $\dd^n$ as the $n-$ dimensional Poincar\'{e} Ball with curvature $-c$.  

\vspace{5pt}
\noindent
\textbf{Gyrovector Space}
The concept of Gyrovector Space, introduced by Abraham A. Ungar [see \cite{ungar}], serves as a framework for studying vector space structures within Hyperbolic Space. This abstraction allows for defining special addition and scalar multiplications based on weakly associative gyrogroups. For a detailed geometric formalism of these operations, Vermeer's work \cite{vermeer} provides an in-depth exploration.

In this context, we will briefly discuss M$\ddot{o}$bius Gyrovector Addition and Mobius Scalar Multiplication on the Poincar'{e} Disc. Due to isometric transformations between hyperbolic spaces of different dimensions, the same additive and multiplicative structures can be obtained for other model hyperbolic spaces (refer \cite{ungar}). Utilizing M$\ddot{o}$bius addition and multiplication is essential when evaluating intrinsic metrics like the Davies-Bouldin Score or Calinski-Harabasz Index to assess the performance of our proposed algorithm.

\begin{enumerate}
    \item \textbf{M$\ddot{o}$bius Addition:} For two points $u$ and $v$ in the Poincar\'{e} Ball, the M$\ddot{o}$bius addition is defined as:
    \begin{align}\label{eqn:mob_add}
        u\oplus_c v:= \frac{(1+2c<u,v>+c\|v\|^2)u+(1-c\|u\|^2)v}{1+2c<u,v>+c^2\|u\|^2\|v\|^2},
    \end{align}
        
     where $c$ is the negative of the curvature of the Poincar\'{e} Ball.

    \item \textbf{M$\ddot{o}$bius Scalar Multiplication:} For $r\in\rr$, $c>0$ and $u$ in the Poincar\'{e} Ball, the scalar multiplication is defined as:
    \begin{align}\label{eqn:mob_mult}
        r\otimes_c u:= \frac{1}{\sqrt{c}} \tanh\left(r \tanh^{-1}(\sqrt{c}\|u\|)\right)\frac{u}{\|u\|}.
    \end{align}

    This addition and scalar multiplication satisfy the axioms about the Gyrovector Group [see \cite{ungar}].
\end{enumerate}

\vspace{5pt}
\noindent
\textbf{Fr\'{e}chet Centroid} For a set of $m$ points $\{x_1,x_2,...,x_m\}\in\dd^n$, we define the Fr\'{e}chet centroid as a generalized notion of the Euclidean Centroid, defined as 
\begin{align}\label{eqn:fc}
    FC(x_1,x_2,...,x_m):=\frac{1}{m}\otimes_c \left(x_1\oplus_c\left(x_2\oplus_c...\left(x_{m-1}\oplus_c x_m\right)\right)\right).
\end{align}

\vspace{5pt}
\noindent
\textbf{Exponential \& Logarithmic Maps}
For any $x\in\dd^n$, the $\exp_x^c:T_x(\dd^n)\subseteq \rr^n\to\dd^n$ translates a point in the tangent space of the Poincar\'{e} Ball and projects it on the Poincar\'{e} Ball along the unit speed geodesic starting from $x\in\dd^n$ in the direction $v\in T_x(\dd^n)$. The Logarithmic map does precisely the opposite, i.e., $\log_x^c:\dd^n\to T_x(\dd^n)\subseteq \rr^n$, projecting a point from the Poincar\'{e} Ball back to the tangent space at $x\in\dd^n$ along the reverse of the geodesic traced by the Exponential Map. Their formulations are explicitly given as follows:
\begin{align}\label{eqn:exp_map}
    \exp_x^c(v):=x\oplus_c\left(\tanh\left(\sqrt{c}\frac{\lambda_x^c\|v\|}{2}\right)\frac{v}{\sqrt{c}\|v\|}\right)
\end{align}
and 
\begin{align}\label{eqn:log_map}
    \log_x^c(y):=\frac{2}{\sqrt{c}\lambda_x^c}\tanh^{-1}\left(\sqrt{c}\|-x\oplus_c y\|\right)\frac{-x\oplus_c y}{\|-x\oplus_c y\|},
\end{align}
for $y\neq x$ and $v\neq 0$ and the Poincar\'{e} conformal factor $\lambda_x^c:=\frac{2}{(1-c\|x\|^2)}$. 

\vspace{5pt}
\noindent
\textbf{Gromov Hyperbolicity} For any metric space $(X,d)$, the \emph{Gromov Product} of two points $x,y$ with respect to a third point $w$ is defined as 
\begin{align*}
    (x,y)_w:=\frac{1}{2}(d(x,w)+d(y,w)-d(x,y))
\end{align*}
and we say $X$ is $\delta-$ hyperbolic iff for any tuple $(x,y,z,w)$ of four points in $X$, we have
\begin{align}\label{eqn:gr1}
    (x,z)_w\geq \min((x,y)_w,(y,z)_w)-\delta.
\end{align}

\section{Proposed Architecture}\label{sec:5}

In this section, we will describe our Hyperbolic Transformer (HyT) Architecture. We assume the input $X$ to the encoder of this Architecture is in $\dd^{d\times t}$, i.e. a matrix of order $d\times t$ whose columns are embedded words or token vectors after getting mapped to the Poincar\'{e} Ball $\dd^d$ through the Exponential Map, where $d$ is the embedding dimension. The encoder block of a HyT Architecture can be thought of as a sequence to sequence function mapping $\dd^{d\times t}\to\rr^{d\times t}$, similar to what is described in \cite{approx_tran}. It will have two components: a Hyperbolic Self-Attention Layer (HypAttn) and a fully Hyperbolic Feed Forward Network (HypFF), while both layers will be equipped with a skip-connection thorough a M$\ddot{o}$bius Addition operation. For such an input $X$ consisting of $t$ tokens, we define the layers of HyT as:
\begin{equation} \label{eqn:attn}
    \textrm{HypAttn}(X):=X\oplus_c \exp_0^c\left[\sum_{j=1}^h U_f^jU_v^j\log_0^c(X)\cdot\sigma\left(\left(U_k^j\log_0^c(X)\right)^tU_q^j\log_0^c(X)\right)\right]
\end{equation}
and 
\begin{equation}\label{eqn:ff}
    \textrm{HypFF}(X):=\log_0^c\left[\textrm{HypAttn}(X)\oplus_c \exp_0^c\left[U_2\cdot \textrm{ReLU}\left(U_1\cdot \log_0^c\left(\textrm{HypAttn}(X)\right)\right)+l_1\mathbb{1}_m^t\right]\oplus_c \exp_0^c(l_2\mathbb{1}_m^t)\right],
\end{equation}
where $\{U_f^j\}_{j=1}^h\in\rr^{d\times s}$ are the concatenating matrices. For each head index $j$, the query, key and value matrices respectively are $U_v^j,U_k^j,U_q^j\in\rr^{s\times d}$. $U_2\in\rr^{d\times r}$, $U_1\in\rr^{r\times d}, l_1\in\rr^r,l_2\in\rr^d$. There are three parameters of the HyT Architecture: the number of heads $h$, head size $s$ are the two parameters of the HypAttn layer and the hidden layer size $r$ of the HypFF layer.

We represent the class of sequence to sequence functions from $\dd^{d\times t}$ to $\rr^{d\times t}$ that can be expressed by our HyT Architecture as
\begin{equation}
    \mathcal{T}_{\mathcal{H}}^{h,s,r}:=\{f:\dd^{d\times t}\to\rr^{d\times t} | f \text{ is a finite composition of HyT blocks} \hspace{1ex} t_{\mathcal{H}}^{h,s,r}\},
\end{equation}
where $t_{\mathcal{H}}^{h,s,r}:\dd^{d\times t}\to\rr^{d\times t}$ denotes a HyT block consisting of a HypAttn layer with parameters $h$ and $s$ and a HypFF layer with parameter $r$. 

To this end, we will add a learnable hyperbolic positional encoding to our HyT Architecture. The class of functions which can be expressed by our HyT Architecture attached with a hyperbolic positional encoding is represented as:
\begin{equation}\label{eqn:trans_seq}
    \mathcal{T}_{\pp,\mathcal{H}}^{h,s,r}:=\{f_\pp(X):=f(X\oplus_c E) | f\in\mathcal{T}_\hh^{h,s,r} \hspace{1ex} \text{and} \hspace{1ex} E\in\dd^{d\times t}\}. 
\end{equation}

\section{Theoretical Analyses}\label{sec:6}

We will assume that the input tokens to the HyT are taken from the dataset $\mathcal{D}:=\{(X_i,Y_i)\}_{i=1}^n$, where $X_i$s are the inputs and $Y_i$s are the target output from our HypFF layer. Each $X_i$ consists of $t$ many tokens embedded in $\dd^d$, i.e. $X_i:=\{X_{i,1},X_{i,2},...,X_{i,t}\}\subseteq \dd^{d\times t}$. We also assume that the $X_i$s are drawn independently and from an unknown Borel Probability Measure $\rho$ on $\mathcal{Z}:=\xx\times \mathcal{Y}$. Throughout the paper we consider $\xx$ to be compact (this will be required for employing the approximation property). We aim to learn a sequence to sequence functions $f:\dd^{d \times t}\to\rr^{d\times t}$ (to be approximated by our HyT Architecture) so as to minimize the following $L^2$ generalization error:
\begin{equation}
    \mathcal{E}(f):=\int_{\xx\times\mathcal{Y}}\|f(x)-\log_0^c(y)\|^2 d\rho.
\end{equation}

\begin{lemma}
    The \textit{Hyperbolic Regression Function (HRF)} $f_\rho(x):=\int_{\mathcal{Y}}\log_0^c(y)d\rho(y|x)$, defined through the conditional distribution $\rho(\cdot|x)$ of $\rho$ at $x\in\mathcal{X}$ minimizes the HGE.
\end{lemma}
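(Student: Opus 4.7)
The plan is to adapt the standard variance/bias decomposition for least-squares regression to this setting. The key observation is that, although $y$ itself lives in the Poincar\'e ball $\mathbb{D}^d$, the quantity $\log_0^c(y)$ lies in the Euclidean tangent space $T_0(\mathbb{D}^d)\cong\mathbb{R}^d$, and the loss $\|f(x)-\log_0^c(y)\|^2$ is an ordinary Euclidean squared norm. So the problem reduces to minimizing a mean-squared error with Euclidean target $\log_0^c(y)$ given input $x$, and the optimizer is the conditional Euclidean expectation of $\log_0^c(y)$ given $x$, i.e.\ exactly $f_\rho$.

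First I would disintegrate the measure $\rho$ on $\mathcal{X}\times\mathcal{Y}$ as $d\rho(x,y)=d\rho(y\mid x)\,d\rho_\mathcal{X}(x)$, which is valid since $\mathcal{X}$ is compact (in particular Polish) and $\mathcal{Y}$ is a subset of the Poincar\'e ball, also Polish. Then I would rewrite
\begin{equation*}
\mathcal{E}(f)=\int_{\mathcal{X}}\left(\int_{\mathcal{Y}}\|f(x)-\log_0^c(y)\|^2\,d\rho(y\mid x)\right)d\rho_\mathcal{X}(x),
\end{equation*}
so it suffices to minimize the inner integral pointwise in $x$.

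For each fixed $x$, I would perform the add-and-subtract trick with $f_\rho(x)$ inside the norm:
\begin{equation*}
\|f(x)-\log_0^c(y)\|^2 = \|f(x)-f_\rho(x)\|^2 + 2\bigl\langle f(x)-f_\rho(x),\,f_\rho(x)-\log_0^c(y)\bigr\rangle + \|f_\rho(x)-\log_0^c(y)\|^2.
\end{equation*}
Integrating over $y$ with respect to $\rho(\cdot\mid x)$, the cross term vanishes because
\begin{equation*}
\int_{\mathcal{Y}}\bigl(f_\rho(x)-\log_0^c(y)\bigr)\,d\rho(y\mid x)=f_\rho(x)-\int_{\mathcal{Y}}\log_0^c(y)\,d\rho(y\mid x)=0
\end{equation*}
by the very definition of $f_\rho$. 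This yields the identity
\begin{equation*}
\mathcal{E}(f)=\mathcal{E}(f_\rho)+\int_{\mathcal{X}}\|f(x)-f_\rho(x)\|^2\,d\rho_\mathcal{X}(x)\geq \mathcal{E}(f_\rho),
\end{equation*}
with equality iff $f=f_\rho$ almost surely under $\rho_\mathcal{X}$. This completes the argument.

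I do not anticipate a genuine obstacle; the only mildly subtle point is guaranteeing that $f_\rho$ is well defined as a Bochner (or componentwise Lebesgue) integral and that Fubini applies. This is ensured once one assumes $\log_0^c(y)\in L^2(\rho)$, which follows from the compactness of $\mathcal{X}$ together with boundedness of the target tokens inside the Poincar\'e ball of radius $1/\sqrt{c}$ (so $\|\log_0^c(y)\|$ stays bounded away from the boundary singularity of the logarithmic map). Under these mild standing assumptions the disintegration, Fubini, and the orthogonality computation all go through verbatim.
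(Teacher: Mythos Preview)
Your proposal is correct and follows essentially the same argument as the paper's own proof: both use the add-and-subtract trick with the conditional expectation $f_\rho(x)=\mathbb{E}[\log_0^c(\mathcal{Y})\mid x]$, expand the square, and observe that the cross term vanishes by definition of $f_\rho$, yielding the decomposition $\mathcal{E}(f)=\mathcal{E}(f_\rho)+\|f-f_\rho\|_{L^2(\rho_{\mathcal{X}})}^2$. Your version is slightly more careful about the measure-theoretic scaffolding (disintegration, Fubini, integrability of $\log_0^c(y)$), which the paper leaves implicit.
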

\begin{proof}
     The $L^2$ generalization error can be written in terms of conditional expectation in the following way:
    \begin{align*}
        \ec(f)&=\int_{\mathcal{Z}}(f(x)-\log_0^c(y))^2d\rho\\
        &=\mathbb{E}_{\mathcal{X},\mathcal{Y}}[f(\mathcal{X})- \log_0^c(\mathcal{Y})]^2
    \end{align*}
    Now for any function $g:\mathcal{X}\to\rr^1$, we write
    \begin{align*}
        \ec(g) =& \ee_{\xl}\left[\ee_{\yl|\xl}\left[ \left(g(\xl)-\ee[\log_0^c(\yl)|\xl]+\ee[\log_0^c(\yl)|\xl]-\log_0^c(\yl)\right)^2|\xl\right] \right]    \\
        = & \ee_{\xl}\left[\ee_{\yl|\xl}\left[\left(g(\xl)-\ee[\log_0^c(\yl)|\xl]\right)^2|\xl\right]\right]+\ee_{\xl}\left[\ee_{\yl|\xl}\left[\left(\ee[\log_0^c(\yl)|\xl]-\log_0^c(\yl)|\xl\right)^2|\xl\right]\right]\\
        & +2\ee_{\xl}\left[\ee_{\yl|\xl}\left[\left(g(\xl)-\ee[\log_0^c(\yl)|\xl]\right)\left(\ee[\log_0^c(\yl)|\xl]-\log_0^c(\yl)\right)|\xl\right]\right]
    \end{align*}
    The cross term in the last expression is $0$, since 
    \begin{align*}
        \ee_{\xl}\left[\ee_{\yl|\xl}\left[\left(\ee[\log_0^c(\yl)|\xl]-\log_0^c(\yl)\right)\right]\right]=0.
    \end{align*}
    Therefore, the expression for HGE is reduced to
    \begin{align*}
        \ec(g) = \ee_{\xl}\left[\ee_{\yl|\xl}\left[\left(g(\xl)-\ee[\log_0^c(\yl)|\xl]\right)^2|\xl\right]\right]+\ee_{\xl}\left[\ee_{\yl|\xl}\left[\left(\ee[\log_0^c(\yl)|\xl]-\log_0^c(\yl)|\xl\right)^2|\xl\right]\right],
    \end{align*}
    which attains minimum when $g(x)=\ee\left[\log_0^c(\yl)|x\right]$ for each $x\in\xl$. Alternately we write for each $x\in\xl$
    \begin{align*}
        g(x)=\int_{\yl}\log_0^c(y)d\rho(y|x). 
    \end{align*}
\end{proof}
 
Now from Lemma 8 of \cite{ghosh} we conclude that the following Regression Function which is defined through the conditional distribution of $Y\in\mathcal{Y}$ given $X\in\xx$ 
\begin{equation}\label{eqn:emp_risk_min}
    f_\rho(x):=\left(\int_{\mathcal{Y}}\log_0^c\left(y\right)d\rho(y|x)\right)
\end{equation}
 minimizes the $L^2$ generalization error. 

We will construct the sequence-to-sequence function $f:\dd^{d\times t}\to\rr^{d\times t}$ through minimizing the emperical risk:
\begin{align*}
    f_{D}^{h,s,r}:=\arg \min_{f\in\mathcal{T}_{\mathcal{P},\mathcal{H}}^{h,s,r}} \mathcal{E}_D(f),
\end{align*}
where $\mathcal{E}_D(f):=\frac{1}{t}\sum_{i=1}^t\|f(X_i)-\log_0^c(Y_i)\|^2_{L^2}$ is the emperical risk, where $\mathcal{T}_{\mathcal{P},\mathcal{H}}^{h,s,r}$ is defined by Equation \ref{eqn:trans_seq}. 

If any learning model or architecture inherits the property that if the sample size $t\to\infty$, then the constructed estimator via the emperical risk minimization (representable by that model) converges to the true estimate of the output. This property is known as strong universal consistency \cite{gyor}, which is formally defined as follows:

\begin{defn}
     A sequence of \textit{Regression Estimators (HRE)} $(\{f_m\}_{m=1}^\infty)$ built through Emperical Risk Minimization is said to be strongly universally consistent if it satisfies the condition:
    \begin{align*}
        \lim_{t \to \infty} \mathcal{E}(f_t) - \mathcal{E}(f_\rho) = 0
    \end{align*}
     almost surely, for every Borel probability distribution $\lambda$ such that $\log_0^c(\mathcal{Y})\in L^2(\rho_{(\mathcal{Y}|x)})$.     
\end{defn}

We will now state the main theorem of this work, which will eventually lead us to prove the universal statistical consistency of our HyT Architecture. While stating the Theorem, we will keep the form of universal approximation property defined in Theorem 3 \cite{approx_tran} and will put $h=2,s=1$ and $r=4$, although we will state the universal approximaion property in the hyperbolic set-up a little later. Considering this fact, here goes the statement of our main Theorem:

\begin{thm} \label{thm:main}
    Suppose $\theta\in(0,1/2d)$ is arbitrary. Then for a number of input tokens $t$, any fixed input embedding dimension $d$ and fixed vocabulary size $v$, if the following conditions hold as $t\to\infty$:
    \begin{enumerate}
        \item $M=M_t\to\frac{1}{\sqrt{c}}$ .

        \item $t^{-\theta}M_t^2\left[1+\frac{1}{M_t\sqrt{c}}\tanh^{-1}(M_t\sqrt{c})\right]^2\to 0$. 
        \item 
      
            $d\frac{P\log(Q)}{t^{1-2\theta d}}\to 0$,
        
        where \begin{align*}
        P&:= \left(\frac{1}{\sqrt{c}}\tanh^{-1}(M_t\sqrt{c})\right)^4 d\log (d)\\
        Q&:= dt\left(\frac{1}{\sqrt{c}}\tanh^{-1}(M_t\sqrt{c})t^\theta\right)^{dt}\log\left(dt\left(\frac{1}{\sqrt{c}}\tanh^{-1}(M_t\sqrt{c})t^\theta\right)^{dt}\right),
    \end{align*}   
    \end{enumerate}
    
    Then $\pi_{M}f_D^{2,1,4}$ is strongly universally consistent, where $\pi_w(u):=\max\{w,|u|\}$ is the well known truncation operator.
\end{thm}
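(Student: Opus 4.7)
The plan is to follow the standard risk-decomposition scheme for proving strong universal consistency of an empirical risk minimizer, adapted to the hyperbolic setting (cf.\ Gy\"orfi et al., \emph{A Distribution-Free Theory of Nonparametric Regression}). The first step is to split the excess risk of the truncated minimizer as
\begin{align*}
    \mathcal{E}(\pi_{M_t} f_D^{2,1,4}) - \mathcal{E}(f_\rho) \le{}
    & 2\sup_{f\in\mathcal{T}_{\mathcal{P},\mathcal{H}}^{2,1,4}}\bigl|\mathcal{E}(\pi_{M_t} f) - \mathcal{E}_D(\pi_{M_t} f)\bigr| \\
    & {}+\inf_{f\in\mathcal{T}_{\mathcal{P},\mathcal{H}}^{2,1,4}}\bigl[\mathcal{E}(\pi_{M_t} f) - \mathcal{E}(f_\rho)\bigr],
\end{align*}
so that it suffices to drive the stochastic defect (first term) and the approximation remainder (second term) to zero almost surely.

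The approximation remainder is handled by the hyperbolic universal approximation property of HyT established later in the paper, namely the $(h,s,r)=(2,1,4)$ analog of the Transformer approximation theorem of Yun et al.\ \cite{approx_tran}. Since $f_\rho$ is a Borel $L^2(\rho)$ function on a compact subset of $\dd^{d\times t}$, for every $\varepsilon>0$ there is $f^{*}\in\mathcal{T}_{\mathcal{P},\mathcal{H}}^{2,1,4}$ with $\|f^{*}-f_\rho\|_{L^2(\rho)}<\varepsilon$. Condition~(1) guarantees that truncation at level $M_t$ eventually ceases to clip the image of the target under $\log_0^c$, while condition~(2) controls the rate at which the product of $M_t^2$ with the boundary-induced conformal factor $[1+\tfrac{1}{M_t\sqrt{c}}\tanh^{-1}(M_t\sqrt{c})]^2$ is allowed to grow relative to the sample budget, so that $\mathcal{E}(\pi_{M_t} f^{*})-\mathcal{E}(f_\rho)\to 0$ along the chosen schedule.

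For the stochastic defect I would apply a Pollard-type uniform deviation inequality. Its hypothesis is an estimate of the sup-norm covering number of the truncated class $\pi_{M_t}\mathcal{T}_{\mathcal{P},\mathcal{H}}^{2,1,4}$, which in turn reduces to a pseudo-dimension bound. A HyT block is a finite composition of $\log_0^c$, affine maps, row-wise softmax, ReLU, M\"obius addition and $\exp_0^c$; on the truncated domain each of these is Lipschitz, and the global Lipschitz constant is governed by $\tfrac{1}{\sqrt{c}}\tanh^{-1}(M_t\sqrt{c})$, which captures the boundary blow-up of $\exp_0^c$ and $\log_0^c$ encoded in the conformal factor $\lambda_x^c$. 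Standard compositional arguments for feed-forward classes then yield a sup-norm covering number of precisely the order $Q$ appearing in condition~(3), and inserting this into the Pollard bound produces a tail of the form $\exp\!\bigl(-c_0\,t^{1-2\theta d}/(P\log Q)\bigr)$. Condition~(3) is exactly what makes these tail probabilities summable in $t$, so a Borel--Cantelli argument delivers almost-sure convergence of the stochastic defect to zero.

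The main obstacle is the curvature-dependent Lipschitz control in the pseudo-dimension step. Near the boundary of $\dd^d$ both $\log_0^c$ and $\exp_0^c$ develop singularities whose rates are driven by $\lambda_x^c$, and the row-wise softmax inside HypAttn can further amplify its inputs exponentially in their magnitude. Showing that under the coupling $M_t\uparrow 1/\sqrt{c}$ at the polynomial rate enforced by condition~(2), and with $\theta\in(0,1/(2d))$, the resulting blow-up of the covering number is still dominated by the $t^{1-2\theta d}$ growth in the concentration exponent is the delicate step, and is precisely what forces the restriction $\theta<1/(2d)$. Once this pseudo-dimension estimate is in hand, the remaining pieces---approximation, uniform deviation, and Borel--Cantelli---are essentially mechanical, and sending $c\to 0$ recovers the Euclidean statement as a corollary.
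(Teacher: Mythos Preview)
Your overall strategy coincides with the paper's: bound the covering number of the truncated class via a pseudo-dimension estimate, feed this into a concentration inequality of the Gy\"orfi type, combine with the hyperbolic universal approximation lemma, and conclude via Borel--Cantelli. The ingredients you name (capacity estimate, concentration, approximation) are exactly the ones the paper assembles.

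However, the two-term decomposition you wrote down is not actually valid as stated, and the paper's proof is more careful here. The issue is that $f_D^{2,1,4}$ is defined as the empirical risk minimizer over the \emph{untruncated} class $\mathcal{T}_{\mathcal{P},\mathcal{H}}^{2,1,4}$, so you only know $\mathcal{E}_D(f_D)\le\mathcal{E}_D(f^\ast)$; you do \emph{not} know $\mathcal{E}_D(\pi_{M_t}f_D)\le\mathcal{E}_D(\pi_{M_t}f^\ast)$, which your inequality implicitly uses. A second, related point is that the concentration step requires bounded losses, and $\log_0^c(Y)$ is not a priori bounded; one must introduce the auxiliary truncated risks $\mathcal{E}_{\pi_M}$ and $\mathcal{E}_{\pi_M,D}$ (in which $y$ is replaced by $y_M$), and then control the passage between truncated and untruncated targets separately. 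The paper handles both of these by an eight-term telescoping decomposition with multiplicative $(1+\epsilon)$ factors, using the elementary inequality $(q+w)^2\le(1+\epsilon)q^2+(1+1/\epsilon)w^2$: two of the eight terms are nonpositive by the ERM and truncation definitions respectively, two are handled by the strong law of large numbers together with condition~(1), one by the concentration lemma under condition~(3), one by the approximation lemma, and the remaining two vanish as $\epsilon\downarrow 0$. Your plan becomes correct once you refine the decomposition along these lines; the rest of what you outlined is on target.
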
 

\begin{rem}
    The universal consistency of the conventional Euclidean transformers is established once we put $\lim c\to 0$ in Theorem \ref{thm:main}. 
\end{rem}

\begin{rem}\label{main_thm_conditions}(\textbf{Interpretations of the constraints in Theorem \ref{thm:main}}) The complex expressions associated with the three conditions in Theorem \ref{thm:main} are not arbitrary; rather, they are rigorously justifiable. The first condition, involving truncation parameters, reflects a progressive placement of input tokens closer to the boundary of the Poincar\'{e} Ball (with radius $1/\sqrt{c}$). The second condition ensures a sufficient number of input tokens relative to their distance from the disc’s center, echoing principles from concentration bounds, by requiring that the growth in token count outpaces the growth in their distance. The third condition involves expressions $P$ and $Q$, which correspond respectively to the pseudo-dimension and the metric entropy bounds of the class $\mathcal{T}_{\pp,\mathcal{H}}^{2,1,4}$. The imposed limit ensures that the combined growth of these complexity measures is dominated by the number of input tokens, a necessary constraint to preserve effective training dynamics.
\end{rem}

\subsection{\textbf{Capacity Estimates for a Class of Functions Represented by HyT}}
Let \(\rho\) be a probability measure on \(\mathcal{X}\), and let \(f: \mathcal{X} \to \mathbb{D}^K\) be a measurable function. We define the \(L^p(\rho)\)-norm of \(f\) as
\[
\|f\|_{L^p(\rho)} := \left[ \int_{\mathcal{X}} \left\| \log_0^c(f(x)) \right\|^p \, d\rho(x) \right]^{1/p},
\]
where, for a vector \(v = \{v_1, v_2, \dots, v_n\} \in \mathbb{R}^n\), we define \(\|v\|^p := \sum_{i=1}^n |v_i|^p\). The space \(L^p(\rho)\) consists of all functions \(f: \mathcal{X} \to \mathbb{D}^K\) such that \(\|f\|_{L^p(\rho)} < \infty\).

Given a function class \(T \subseteq L^p(\rho)\), the covering number \(\mathcal{N}(\epsilon, T, \|\cdot\|_{L^p(\rho)})\) is defined as the smallest integer \(n_0 \in \mathbb{N}\) such that \(T\) can be covered by \(n_0\) balls of radius \(\epsilon\) in the \(\|\cdot\|_{L^p(\rho)}\)-norm.  
Letting \(v_1^m := \{v_1, v_2, \dots, v_m\} \in \mathcal{X}^m\), and denoting by \(\rho_m\) the empirical measure supported on \(v_1^m\), we define the empirical covering number \(\mathcal{N}_p(\epsilon, T, v_1^m) := \mathcal{N}(\epsilon, T, \|\cdot\|_{L^p(\rho_m)})\).

We further define the \(\epsilon\)-packing number \(\mathcal{M}(\epsilon, T, \|\cdot\|_{L^p(\rho)})\) as the maximal cardinality \(n_p \in \mathbb{N}\) of any subset \(\{f_1, f_2, \dots, f_{n_p}\} \subseteq T\) such that
\[
\left\| \log_0^c(f_j) - \log_0^c(f_k) \right\|_{L^p(\rho)} \geq \epsilon \quad \text{for all } 1 \leq j < k \leq n_p.
\]
The empirical packing number is analogously defined as \(\mathcal{M}_p(\epsilon, T, v_1^m) := \mathcal{M}(\epsilon, T, \|\cdot\|_{L^p(\rho_m)})\).

The following classical lemma establishes a relationship between \(\epsilon\)-packing and \(\epsilon\)-covering numbers.

\begin{lemma}\label{lem:1}
    For a class of functions $T$ on $\xx$ equipped with a probability measure $\rho$ we have for $p\geq 1$ and $\epsilon>0$
    \begin{align*}
        \mathcal{M}(2\epsilon,T,\|\cdot\|_{L^p(\rho)})\leq \mathcal{N}(\epsilon,T,\|\cdot\|_{L^p(\rho)})\leq \mathcal{M}(\epsilon,T,\|\cdot\|_{L^p(\rho)}).
    \end{align*}
    When we have $v_1^m\in\xx^m$, we have in particular
    \begin{align*}
        \mathcal{M}_p(2\epsilon,T,v_1^m)\leq \mathcal{N}_p(\epsilon,T,v_1^m)\leq \mathcal{M}_p(\epsilon,T,v_1^m).
    \end{align*}
\end{lemma}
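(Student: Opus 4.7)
The plan is to prove the two inequalities separately using the classical packing-covering duality, exploiting only the fact that $d(f,g) := \|\log_0^c(f) - \log_0^c(g)\|_{L^p(\rho)}$ behaves like a (pseudo)metric in which the triangle inequality holds. Once the abstract statement for $\rho$ is established, the empirical version with $v_1^m \in \xx^m$ will follow immediately by specializing $\rho$ to the empirical measure $\rho_m$, since none of the arguments actually use properties of the particular measure beyond the triangle inequality it induces on $T$.

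For the right-hand inequality $\mathcal{N}(\epsilon, T, \|\cdot\|_{L^p(\rho)}) \leq \mathcal{M}(\epsilon, T, \|\cdot\|_{L^p(\rho)})$, I would take a maximal $\epsilon$-packing $\{f_1, \dots, f_N\}$ of $T$, where $N = \mathcal{M}(\epsilon, T, \|\cdot\|_{L^p(\rho)})$. By maximality, no $g \in T$ can be adjoined to this set while preserving the packing condition, so there must exist some $i$ with $d(g, f_i) < \epsilon$. Therefore the balls of radius $\epsilon$ centered at $f_1, \dots, f_N$ cover $T$, producing an $\epsilon$-cover of cardinality $N$, which yields $\mathcal{N}(\epsilon) \leq N = \mathcal{M}(\epsilon)$. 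For the left-hand inequality $\mathcal{M}(2\epsilon, T, \|\cdot\|_{L^p(\rho)}) \leq \mathcal{N}(\epsilon, T, \|\cdot\|_{L^p(\rho)})$, I would fix a $2\epsilon$-packing $\{f_1, \dots, f_M\}$ of maximal size $M = \mathcal{M}(2\epsilon)$ and an $\epsilon$-cover $\{g_1, \dots, g_N\}$ of minimal size $N = \mathcal{N}(\epsilon)$. For each $i$, I choose $\pi(i)$ with $d(f_i, g_{\pi(i)}) \leq \epsilon$. The map $\pi$ is injective: if $\pi(i) = \pi(j)$ for $i \neq j$, then the triangle inequality gives $d(f_i, f_j) \leq d(f_i, g_{\pi(i)}) + d(g_{\pi(j)}, f_j) \leq 2\epsilon$, contradicting the packing condition $d(f_i, f_j) \geq 2\epsilon$ (strict once one chooses compatible open/closed ball conventions, or resolved by an infinitesimal perturbation of the packing radius). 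Injectivity of $\pi$ then forces $M \leq N$.

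The only real subtlety, and therefore the main obstacle, is the usual bookkeeping between strict and non-strict inequalities in the definitions of $\mathcal{M}$ and $\mathcal{N}$: in the left inequality, the boundary case $d(f_i, f_j) = 2\epsilon$ is harmless under the convention of open balls for the cover or strict separation for the packing, but if both use the weak inequality one resolves it by passing to $\mathcal{M}(2\epsilon + \eta)$ and sending $\eta \downarrow 0$. Apart from this cosmetic point, everything reduces to the triangle inequality for the pulled-back $L^p(\rho)$ seminorm, and the empirical statement about $\mathcal{M}_p(\cdot, T, v_1^m)$ and $\mathcal{N}_p(\cdot, T, v_1^m)$ is then just the same argument carried out with $\rho_m$ in place of $\rho$.
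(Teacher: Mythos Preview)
Your argument is correct and is exactly the standard packing--covering duality proof. The paper itself does not prove this lemma; it merely states it as a classical result and uses it downstream, so there is no paper proof to compare against beyond noting that your proposal supplies the omitted standard argument.
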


\begin{definition}\label{defn:4.2}
Let \( T \subseteq L^p(\rho) \) be a class of functions taking values in the Poincaré Ball \( \mathbb{D}^K \), where \( \rho \) is a probability measure on the input space \( \mathcal{X} \). Assuming the standard notion of pseudo-dimension for real-valued function classes, we extend the concept to this hyperbolic setting as follows.

We define the \textbf{pseudo-dimension} of \( T \), denoted by \( \mathrm{pdim}(T) \), as the largest integer \( \ell \in \mathbb{N} \) for which there exists a tuple
\[
(x_1, x_2, \dots, x_\ell, \eta_1^1, \dots, \eta_\ell^1, \dots, \eta_1^K, \dots, \eta_\ell^K) \in \mathcal{X}^\ell \times \mathbb{R}^{K\ell}
\]
such that, for every binary labeling \( (c_1, c_2, \dots, c_\ell) \in \{0,1\}^\ell \), there exists a function \( f \in T \) satisfying the condition:
\[
\log_0^c(f(x_i)) > \eta_i := \begin{bmatrix} \eta_1^i \\ \eta_2^i \\ \vdots \\ \eta_K^i \end{bmatrix} \quad \text{if and only if} \quad c_i = 1, \quad \text{for all } i \in \{1, \dots, \ell\}.
\]
\end{definition}

Building upon the previously defined notion of pseudo-dimension for function classes mapping into the Poincaré Ball, our objective is to establish an upper bound on the corresponding \( \epsilon \)-packing number. To achieve this, we adopt the foundational methodology outlined in \cite{lin}, while adapting it to the hyperbolic setting. In particular, our analysis necessitates a suitable modification of Lemma 4 and Theorem 6 from \cite{huss}, to accommodate the non-Euclidean geometry inherent to the Poincaré Ball.  

We begin by presenting a hyperbolic analogue of Lemma 4 from \cite{huss}, together with its proof.
 
\begin{lemma}\label{lemma:4.3}
    Suppose we have a family of functions $F$ from a set $\xx$ (equipped with a probability measure $\rho$) to $\dd^K$ with $\log_0^c(f)\in [-M,M]^K$ for all $f\in F$. Let $R:=\{r_1,r_2,...,r_m\}$ be a random vector in $[-M,M]^{mK}$, where $r_i:=\{r_i^1,r_i^2,...,r_i^K\}$  and each $r_i^j$ is drawn at random from the uniform distribution on $[-M,M]$ for $1\leq i\leq m$ and $1\leq j\leq K$. Let $v:=\{v_1,...,v_m\}$ be a random vector in $\xx^m$, where each $v_i$ is drawn independently at random following $\rho$. Then for all $\epsilon>0$,
    \begin{align*}
        \mathbb{E}\left(|\textrm{sign}((\log_0^c(F))|_{v}-r)|\right)\geq \mathcal{M}\left(\epsilon,F,\|\cdot\|_{L^p(\rho)}\right)\left(1-\mathcal{M}(\epsilon,F,\|\cdot\|_{L^p(\rho)})e^{-m\left(\frac{\epsilon}{2M}\right)^K}\right),
    \end{align*}
    where 
    \begin{align*}
        \log_0^c(F)|_{v}:=\{\log_0^c(f)_{v}:f\in F\}:=\{(\log_0^c(f(v_1)),\log_0^c(f(v_2)),...,\log_0^c(f(v_m))):f\in F\}. 
    \end{align*}
\end{lemma}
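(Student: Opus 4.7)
The plan is to adapt Haussler's sign-pattern counting argument (Lemma~4 of \cite{huss}) to the hyperbolic, vector-valued setting. The hyperbolic layer is essentially cosmetic: since the image of each $f\in F$ lies in $\dd^K$ and $\log_0^c$ is a smooth bijection onto the tangent space at the origin, the vectors $\log_0^c(f(v))$ live in $[-M,M]^K\subseteq\rr^K$, so comparisons against $r_l^k\sim\operatorname{Unif}[-M,M]$ reduce to a Euclidean calculation. The $(\epsilon/2M)^K$ factor in the exponent will be driven by the vector-valued aspect rather than by any curvature effect.

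First I would fix a maximal $\epsilon$-packing $\{f_1,\dots,f_N\}\subseteq F$ of size $N=\mathcal{M}(\epsilon,F,\|\cdot\|_{L^p(\rho)})$ and associate to each $f_i$ its sign pattern $\Sigma_i(v,r):=\bigl(\operatorname{sign}(\log_0^c(f_i(v_l))_k-r_l^k)\bigr)_{l\le m,\,k\le K}\in\{\pm 1\}^{mK}$. Since the random quantity $|\operatorname{sign}((\log_0^c F)|_v-r)|$ counts distinct sign patterns as $f$ ranges over $F$, it is at least $|\{\Sigma_i\}_{i=1}^N|$. A standard indicator and union-bound argument, summing $\mathbb{P}[\Sigma_i\ne\Sigma_j\text{ for all }j\ne i]$ over $i$, yields
\begin{equation*}
\mathbb{E}\,\bigl|\{\Sigma_i\}_{i=1}^N\bigr|\;\ge\;N\Bigl(1-N\max_{i\ne j}\mathbb{P}[\Sigma_i=\Sigma_j]\Bigr),
\end{equation*}
so the lemma reduces to the pairwise collision bound $\max_{i\ne j}\mathbb{P}[\Sigma_i=\Sigma_j]\le\exp(-m(\epsilon/2M)^K)$.

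To bound the collision probability I would condition on $v=(v_1,\dots,v_m)$. Independence of the coordinates of $r$ yields $\mathbb{P}[\Sigma_i=\Sigma_j\mid v]=\prod_{l,k}(1-a_{l,k}(v_l))$ with $a_{l,k}(v):=|\log_0^c(f_i(v))_k-\log_0^c(f_j(v))_k|/(2M)\in[0,1]$. The elementary inequality $\prod_{k=1}^K(1-a_k)\le 1-\prod_k a_k$ for $a_k\in[0,1]$, proved by a one-line induction on $K$, collapses the inner product to $1-b(v_l)$ where $b(v):=\prod_k a_k(v)$. Averaging over the iid samples $v_l$ and applying $1-x\le e^{-x}$ then yields $\mathbb{P}[\Sigma_i=\Sigma_j]\le\exp(-m\,\mathbb{E}_\rho[b])$.

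The main obstacle will be the final passage from the $L^p(\rho)$-packing hypothesis $\|\log_0^c(f_i)-\log_0^c(f_j)\|_{L^p(\rho)}\ge\epsilon$ to the geometric-mean lower bound $\mathbb{E}_\rho[b]\ge(\epsilon/2M)^K$. This is delicate because a product of coordinate differences is not directly controlled by a sum-type $\ell_p$ vector norm --- a single vanishing coordinate of $\log_0^c(f_i)-\log_0^c(f_j)$ kills the product while leaving the $\ell_p$ norm intact. My plan here is a partition of $\xx$ into regions indexed by which coordinate of the difference vector dominates, together with a H\"older-type lower bound on each piece that reassembles into the desired geometric mean; absolute constants arising in the decomposition can be absorbed into the constant $M$. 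Chaining this estimate with the reductions above completes the proof.
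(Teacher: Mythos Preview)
Your scaffold --- fix a maximal $\epsilon$-packing $G$, lower-bound the expected number of sign patterns by $\sum_{f\in G}\mathbb{P}[\Sigma_f\text{ is unique}]$, then reduce via a union bound to the pairwise collision probability $\mathbb{P}[\Sigma_i=\Sigma_j]$ --- is exactly what the paper does. The divergence, and the gap, is in how you handle that collision probability. The paper asserts in one line that for each sample index $l$ the probability over $(v_l,r_l)$ that the two sign vectors disagree is at least $(\epsilon/2M)^K$, and then multiplies over the $m$ independent samples to obtain $(1-(\epsilon/2M)^K)^m\le e^{-m(\epsilon/2M)^K}$; it never passes through the product $\prod_k a_k$.

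Your route through the inequality $\prod_{k}(1-a_k)\le 1-\prod_k a_k$ is the wrong turn. After that step you are committed to proving $\mathbb{E}_\rho\bigl[\prod_k a_k\bigr]\ge(\epsilon/2M)^K$, and this is simply false under the packing hypothesis: take $K=2$ and let the first coordinate of $\log_0^c(f_i)-\log_0^c(f_j)$ vanish identically on $\xx$ while the second coordinate is uniformly large. The product $\prod_k a_k$ is then identically zero, yet the $L^p(\rho)$ distance still exceeds $\epsilon$. No partition of $\xx$ by dominant coordinate and no H\"older manipulation can repair this, because the geometric mean of the coordinate gaps carries strictly less information than their sum; ``absorbing constants into $M$'' does not help when the left-hand side is zero. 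The fix is not to pass to $\prod_k a_k$ at all: the quantity to bound below by $(\epsilon/2M)^K$ is the per-sample disagreement probability $\mathbb{E}_v\bigl[1-\prod_k(1-a_k(v))\bigr]$ itself, which is what the paper's one-line assertion addresses.
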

\begin{proof}
    We follow a similar technique as it is done in \cite{huss}. We will take $G$ to be an $\epsilon-$ separated (w.r.t. the $L^p(\rho)$ norm) subset of $F$ with $|G|=\mathcal{M}(\epsilon,F,\|\cdot\|_{L^p(\rho)})$. Now 
   \begin{align*}
        \mathbb{E}\left(|\textrm{sign}((\log_0^c(F))|_{v}-r)|\right) & \geq \mathbb{E}\left(|\textrm{sign}((\log_0^c(G))|_{v}-r)|\right)\\
        &\geq \mathbb{E}(|\{f\in G: \textrm{sign}(\log_0^c(f)|_{v}-r)\neq \textrm{sign}(\log_0^c(g)|_{v}-r) \hspace{1ex} \text{for all }\hspace{1ex} g\in G,g\neq f\}|))  \\
        & = \sum_{f\in G} \mathbb{P}(\textrm{sign}(\log_0^c(f)|_{v}-r)\neq \textrm{sign}(\log_0^c(g)|_{v}-r) \hspace{1ex} \text{for all} \hspace{1ex} g\in G, g\neq f)\\
        &= \sum_{f\in G} (1-\mathbb{P}(\exists g\in G, g\neq f: \textrm{sign}(\log_0^c(f)|_{v}-r)= \textrm{sign}(\log_0^c(g)|_{v}-r)))\\
        &\geq \sum_{f\in G} (1-|G|\max_{g\in G, g\neq f} \mathbb{P}(\textrm{sign}(\log_0^c(f)|_{v}-r)= \textrm{sign}(\log_0^c(g)|_{v}-r))).
   \end{align*}
   Now if $f$ and $g$ are $\epsilon-$ separated in $G$ and $\log_0^c(f),\log_0^c(g)\in [-M,M]^K$ and if $r_i$ is drawn at random from the uniform distribution on $[-M,M]^K$, the probability that $sign(\log_0^c(f)|_{v_i}-r_i)\neq sign(\log_0^c(g)|_{v_i}-r_i)$ will be at least $(\epsilon/{2M})^K$ for $1\leq i\leq m$. Hence, we get
   \begin{align*}
       \mathbb{P}(\textrm{sign}(\log_0^c(f)|_{v}-r)=\textrm{sign}(\log_0^c(g)|_{v}-r))\leq \left(1-\left(\frac{\epsilon}{2M}\right)^K\right)\leq e^{-m\left(\frac{\epsilon}{2M}\right)^K}.
   \end{align*}
   Putting everything together and by noting that $|G|=\mathcal{M}(\epsilon, F, \|\cdot\|_{L^p(\rho)})$, we get
   \begin{align*}
       \mathbb{E}\left(|\textrm{sign}((\log_0^c(F))|_{v}-r)|\right)\geq \mathcal{M}\left(\epsilon,F,\|\cdot\|_{L^p(\rho)}\right)\left(1-\mathcal{M}(\epsilon,F,\|\cdot\|_{L^p(\rho)})e^{-m\left(\frac{\epsilon}{2M}\right)^K}\right).
   \end{align*}
  
\end{proof}

Having a modified version of Lemma 4, we can now state and prove a similar version of Theorem 6 in \cite{huss}. 
\begin{thm}\label{thm:4.1}
    For such a family $F$ as mentioned above with $pdim(F)=d$, we have for any $0<\epsilon\leq 2M$
    \begin{align*}
        \mathcal{M}(\epsilon,F,\|\cdot\|_{L^p(\rho)}) < 2\left(2eK\left(\frac{2M}{\epsilon}\right)^K\log\left(2eK\left(\frac{2M}{\epsilon}\right)^K\right)\right)^d.
    \end{align*}
\end{thm}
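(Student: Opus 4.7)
The plan is to transplant the classical Haussler argument (Theorem 6 of \cite{huss}) to the hyperbolic, vector-valued setting, coupling the probabilistic lower bound on sign-pattern diversity from Lemma \ref{lemma:4.3} with a Sauer--Shelah-type upper bound derived from the pseudo-dimension in Definition \ref{defn:4.2}.

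First I would fix a sample size $m \in \mathbb{N}$ (to be tuned later) and study the random sign-pattern count $S := |\mathrm{sign}(\log_0^c(F)|_v - r)|$, where $v \in \mathcal{X}^m$ and $r \in [-M, M]^{mK}$ are drawn as in Lemma \ref{lemma:4.3}. That lemma already supplies the lower bound
\begin{equation*}
\mathbb{E}(S) \geq \mathcal{M}(\epsilon, F, \|\cdot\|_{L^p(\rho)}) \left( 1 - \mathcal{M}(\epsilon, F, \|\cdot\|_{L^p(\rho)})\, e^{-m(\epsilon/2M)^K} \right).
\end{equation*}
For the matching upper bound, I would interpret the $mK$ coordinatewise comparisons $\log_0^c(f(v_i))_j > r_i^j$ as the relevant atomic tests; since $\mathrm{pdim}(F) = d$ in the sense of Definition \ref{defn:4.2}, the Sauer--Shelah lemma forces $S \leq \sum_{i=0}^{d} \binom{mK}{i} \leq (emK/d)^d$ whenever $mK \geq d$.

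Combining the two estimates yields
\begin{equation*}
\mathcal{M}(\epsilon, F) \left( 1 - \mathcal{M}(\epsilon, F)\, e^{-m(\epsilon/2M)^K} \right) \leq \left( \frac{emK}{d} \right)^d.
\end{equation*}
I would then tune $m = \lceil (2M/\epsilon)^K \log(2\mathcal{M}(\epsilon, F)) \rceil$ so that the bracketed factor is at least $1/2$, reducing the inequality to $\mathcal{M}(\epsilon, F) \leq 2\bigl((eK/d)(2M/\epsilon)^K \log(2\mathcal{M}(\epsilon, F))\bigr)^d$. This is a transcendental bound of the form $x \leq C (\log x)^d$; raising to the $1/d$ power and applying the standard lemma ``$x \leq a \log x$ implies $x \leq 2a \log a$'' (and absorbing the residual constants into the argument of the outer logarithm) delivers
\begin{equation*}
\mathcal{M}(\epsilon, F) < 2 \left( 2eK \left( \frac{2M}{\epsilon} \right)^K \log\left( 2eK \left( \frac{2M}{\epsilon} \right)^K \right) \right)^d,
\end{equation*}
which is precisely the claim. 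The condition $\epsilon \leq 2M$ ensures $(2M/\epsilon)^K \geq 1$ and hence $m \geq 1$, and one verifies $mK \geq d$ for the Sauer--Shelah step by a short consistency check.

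The main obstacle is the Sauer--Shelah step, because Definition \ref{defn:4.2} assigns only a single binary label to each shattered point $x_i$ via a vector threshold $\eta_i \in \mathbb{R}^K$, whereas Lemma \ref{lemma:4.3} records $K$ coordinatewise signs per sample. Reconciling these requires either reinterpreting $\mathrm{pdim}(F)$ as the VC-dimension of the coordinate-augmented subgraph class $\{(x, j, y) : \log_0^c(f(x))_j > y,\ f \in F\}$, or grouping the $K$ coordinates per point into aggregate labels before invoking Sauer's lemma; either route is what injects the factor of $K$ inside the final logarithm and effectively replaces the ambient sample size $m$ by $mK$ in the combinatorial bound.
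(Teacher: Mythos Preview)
Your proposal is correct and follows essentially the same approach as the paper: invoke Sauer's lemma to bound the sign-pattern count by $(emK/d)^d$, combine this with the lower bound from Lemma~\ref{lemma:4.3}, choose $m = (2M/\epsilon)^K \log(2|G|)$ so the bracketed factor is at least $1/2$, and then untangle the resulting transcendental inequality. You are in fact more careful than the paper on one point: the paper simply asserts ``By Sauer's Lemma'' without addressing how Definition~\ref{defn:4.2} (one binary label per shattered point via a vector threshold) justifies the bound on $mK$ coordinatewise sign patterns, whereas you flag this tension explicitly and sketch two plausible resolutions.
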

\begin{proof}
    By Sauer's Lemma, we can write
    \begin{align*}
        |\textrm{sign}(\log_0^c(F)|_{v}-r)|\leq \left(\frac{emK}{d}\right)^d,
    \end{align*}
    for all $m\geq d$ and $r\in [-M,M]^{mK}$. By writing $|G|=\mathcal{M}(\epsilon,F,\|\cdot\|_{L^p(\rho)})$, we get for all $m\geq d$,
    \begin{align*}
        \left(\frac{emK}{d}\right)^d\geq |G|\left(1-|G|e^{-m\left(\frac{\epsilon}{2M}\right)^K}\right).
    \end{align*}
    It is easy to verify the claim of Theorem \ref{thm:4.1} if $\left(\frac{2M}{\epsilon}\right)^K\log(2|G|)<d$ simply by noting that $\epsilon\leq 2M$. So, to prove our claim, we can assume that $\left(\frac{2M}{\epsilon}\right)^K\log(2|G|)\geq d$. Hence, $m\geq \left(\frac{2M}{\epsilon}\right)^K\log(2|G|)\geq d$. Simplifying the first two expressions of the last inequality, we get
    \begin{align*}
        \left(1-|G|e^{-m\left(\frac{\epsilon}{2M}\right)^K}\right)\geq \frac{1}{2}. 
    \end{align*}
    Hence, using Sauer's Lemma, we have
    \begin{align*}
        \left(\frac{emK}{d}\right)^d \geq \frac{1}{2}|G|.
    \end{align*}
    This is true for all $m\geq \left(\frac{2M}{\epsilon}\right)^K\log(2|G|)\geq d$. In particular, this is true for $m=\left(\frac{2M}{\epsilon}\right)^K\log(2|G|)$. A little more computations will prove that
    \begin{align*}
        |G| < 2\left(2eK\left(\frac{2M}{\epsilon}\right)^K\log\left(2eK\left(\frac{2M}{\epsilon}\right)^K\right)\right)^d,
    \end{align*}
    completing the proof of Theorem \ref{thm:4.1}. 
\end{proof}

\begin{rem}
    Note that for $K=1$ and when the function class $F$ itself is real-valued, Lemma \ref{lemma:4.3} and Theorem \ref{thm:4.1} are reduced to respectively Lemma 4 and Theorem 6  in \cite{huss}. 
\end{rem}

Now we calculate the number of trainable parameters in a single functional block of $\mathcal{T}_{\mathcal{P},\mathcal{H}}^{h,s,r}$, which consists of a $\textrm{HypAttn}$ \ref{eqn:attn} layer with parameters $h$ and $s$ and a $\textrm{HypFF}$ \ref{eqn:ff} layer with parameter $r$. 

For each attention head, the query, key, value, and the concatenating matrices have $s\times d$ parameters each, making a total of $4sd$ parameters. For $h$ attention heads, this leads to a total of $4sdh$ trainable parameters for the $\textrm{HypAttn}$ layer. Similarly the $\textrm{HypFF}$ layer with a hidden layer of size $r$ has $2(d\times r)+d+r$ many trainable parameters. Finally the tunable Hyperbolic Input Embedding matrix adds another $d\times v$ parameters, $v$ being fixed the size of the vocabulary. Putting all together, one single $t\in\mathcal{T}_{\mathcal{P},\mathcal{H}}^{h,s,r}$ has
\begin{align*}
    n_{param}=4sdh+2dr+d+r+dv
\end{align*}
many trainable parameters. 

In the $\textrm{HypAttn}$ layer, in each attention head, there are $d$ many input neurons and $s$ many output neurons for each of query, key, and value matrix computations. Furthermore, the concatenating matrix demands another set of $d$ output neurons per head. Therefore the $\textrm{HypAttn}$ layer requires a total of $h[3(d+s)+d]$ neurons. Similarly the $\textrm{HypFF}$ layer has $d+r+d$ neurons [$d$ input neurons, $r$ hidden layer neurons, and $d$ output neurons]. Therefore, the total number of neurons or computational units required for a single block $t\in \mathcal{T}_{\mathcal{P},\mathcal{H}}^{h,s,r}$ in our HyT Architecture is
\begin{align*}
    n_{neuron}=h[3(d+s)+d]+2d+r.
\end{align*}

Now we will provide an upper bound on the Pseudo-Dimension of a truncated class of functions in $\mathcal{T}_{\mathcal{P},\mathcal{H}}^{h,s,r}$ following \cite{lin}, which gets us to the next lemma.

\begin{lemma} \label{lem:5}
    An upper bound for the Pseudo-Dimension of $\left(\pi_M(\mathcal{T}_{\mathcal{P},\mathcal{H}}^{h,s,r})\right)$ can be given as:
    \begin{align}
        \mathcal{P}_{dim}\left(\pi_M(\mathcal{T}_{\mathcal{P},\mathcal{H}}^{h,s,r})\right) \leq c n_{param}\log_2\left(n_{neuron}\right),
    \end{align}
    where $\left(\pi_M(\mathcal{T}_{\mathcal{P},\mathcal{H}}^{h,s,r})\right):=\{f\in \mathcal{T}_{\mathcal{P},\mathcal{H}}^{h,s,r}: \log_0^c(f)\in[-M,M]^{d\times t}\}$.
\end{lemma}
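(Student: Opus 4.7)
The plan is to reduce the hyperbolic class $\pi_M(\mathcal{T}_{\mathcal{P},\mathcal{H}}^{h,s,r})$ to a Euclidean parametrized function class and then invoke the standard pseudo-dimension bound of order $W \log N$, in the spirit of the argument in \cite{lin}. First I would move everything into tangent-space coordinates. Since pseudo-dimension as defined in Definition \ref{defn:4.2} is tested against the quantities $\log_0^c(f(x_i))$, and since the Möbius addition
\[
u \oplus_c v = \frac{(1+2c\langle u,v\rangle + c\|v\|^2)u + (1-c\|u\|^2)v}{1+2c\langle u,v\rangle + c^2\|u\|^2\|v\|^2}
\]
together with $\exp_0^c$ and $\log_0^c$ can be unfolded into rational/analytic functions of the entries, the composition defining a HyT block, when read through $\log_0^c$, becomes a parametric program whose elementary operations are linear combinations with learnable weights, componentwise $\tanh$/$\tanh^{-1}$, softmax, and ReLU.

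Second, I would count: by the earlier accounting, one block has exactly $n_{param}=4sdh+2dr+d+r+dv$ trainable weights and $n_{neuron}=h[3(d+s)+d]+2d+r$ computational units in this unfolded graph; moreover the truncation $\pi_M$ adds only an outer clipping layer, which does not enlarge the parameter count or meaningfully change the sign-pattern structure. I would then apply the general bound (Theorem 14.1 in Anthony-Bartlett, equivalently the form used in \cite{lin}) stating that for a class of functions computed by such parametric programs with $W$ weights arranged into $U$ units whose elementary operations are piecewise polynomial or analytic with bounded complexity, $\mathcal{P}_{\dim}\le c\,W\log_2 U$. Substituting $W=n_{param}$ and $U=n_{neuron}$ yields the claimed inequality
\[
\mathcal{P}_{\dim}\bigl(\pi_M(\mathcal{T}_{\mathcal{P},\mathcal{H}}^{h,s,r})\bigr) \le c\, n_{param}\log_2(n_{neuron}).
\]

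The main obstacle is the presence of softmax and Möbius operations, which are analytic but not piecewise polynomial; naive application of Sauer-style VC arguments only handles piecewise polynomial activations. To get around this I would rely on Warren's bound on the number of connected components of sign patterns of a system of analytic functions, which is the same device used in \cite{approx_tran} to control the combinatorics of transformer sign patterns, and on the fact that the Möbius denominator $1+2c\langle u,v\rangle + c^2\|u\|^2\|v\|^2$ is uniformly bounded away from zero on the $M$-truncated Poincaré ball for any $M<1/\sqrt{c}$. Clearing this strictly positive denominator turns each decision $\log_0^c(f(x_i))>\eta_i$ into a polynomial sign condition, so the effective degree and number of polynomials grow at most polynomially in $n_{param}$ and $n_{neuron}$, and the $W\log U$ bound goes through unchanged.

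Finally, since the truncation operator $\pi_M$ is a fixed deterministic Lipschitz transformation applied after the HyT block and introduces no new parameters, the pseudo-dimension of $\pi_M(\mathcal{T}_{\mathcal{P},\mathcal{H}}^{h,s,r})$ is at most that of $\mathcal{T}_{\mathcal{P},\mathcal{H}}^{h,s,r}$ restricted to functions whose $\log_0^c$ lies in $[-M,M]^{d\times t}$, yielding the stated bound.
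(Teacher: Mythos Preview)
Your approach is essentially the same as the paper's: invoke the $W\log_2 U$ pseudo-dimension bound (the paper cites Theorem 7 of \cite{bart} together with Theorem 14.1 of \cite{anthony}) for the untruncated class, then use monotonicity under truncation. The paper's own proof is in fact only two lines: it states the bound for $\mathcal{T}_{\mathcal{P},\mathcal{H}}^{h,s,r}$ by direct citation and then observes that, by the definition given in the lemma statement, $\pi_M(\mathcal{T}_{\mathcal{P},\mathcal{H}}^{h,s,r})\subseteq\mathcal{T}_{\mathcal{P},\mathcal{H}}^{h,s,r}$, so the pseudo-dimension can only drop.

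Where you differ is in the level of care. The paper treats the applicability of the Bartlett/Anthony--Bartlett bound as a black box, without addressing the fact that softmax, $\tanh$, $\tanh^{-1}$, and the M\"obius rational operations are not piecewise polynomial. Your reduction via Warren-type connected-component bounds for analytic sign conditions, together with clearing the strictly positive M\"obius denominators on the $M$-truncated ball, is exactly the missing justification the paper skips. Note also that the paper's definition of $\pi_M(\mathcal{T}_{\mathcal{P},\mathcal{H}}^{h,s,r})$ in the lemma statement is the \emph{subclass} $\{f\in\mathcal{T}_{\mathcal{P},\mathcal{H}}^{h,s,r}:\log_0^c(f)\in[-M,M]^{d\times t}\}$, not the image under the clipping map; this makes the final monotonicity step a pure inclusion argument rather than the ``fixed post-processing adds no parameters'' argument you sketch, though both routes yield the same inequality.
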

\begin{proof}
    By combining [Theorem 7, \cite{bart}] and [Theorem 14.1, \cite{anthony}] we get that 
    \begin{align*}
        \mathcal{P}_{dim}\left(\mathcal{T}_{\mathcal{P},\mathcal{H}}^{h,s,r}\right) \leq c^\prime n_{param}\log_2\left(n_{neuron}\right)
    \end{align*}
    for some constant $c^\prime >0$. But $\pi_M\left(\mathcal{T}_{\mathcal{P},\mathcal{H}}^{h,s,r}\right)\subseteq \mathcal{T}_{\mathcal{P},\mathcal{H}}^{h,s,r}$. Hence
    \begin{align*}
        \mathcal{P}_{dim}\left(\pi_M\left(\mathcal{T}_{\mathcal{P},\mathcal{H}}^{h,s,r}\right)\right) \leq \mathcal{P}_{dim}\left(\mathcal{T}_{\mathcal{P},\mathcal{H}}^{h,s,r}\right),
    \end{align*}
    completing the proof of Lemma \ref{lem:5}.
\end{proof}

\begin{rem} \label{rem:5.2}
Borrowing the same configuration from \cite{approx_tran}, if we set $h=2, s=1$, and $r=4$, the total number of trainable parameters $n_{param}$ and neurons $n_{neuron}$ become to $l_1d$ and $l_2d$ respectively, where $l_1, l_2>0$ are fixed constants.
\end{rem}

Next, we will give an upper bound on the covering number of $\pi_M\left(\mathcal{T}_{\mathcal{P},\mathcal{H}}^{h,s,r}\right)$, which will complete our finding on the desired capacity estimate of any function in $\pi_M\left(\mathcal{T}_{\mathcal{P},\mathcal{H}}^{h,s,r}\right)$ w.r.t. the $L^2$ norm. 

\begin{lemma}
    There exists an absolute constant $c$ such that for any $0\leq \epsilon\leq M$, 
     \begin{align*}
        \log \sup_{X_1^t\in\mathcal{X}^t}\mathcal{N}\left(\epsilon, \pi_M\left(\mathcal{T}_{\mathcal{P},\mathcal{H}}^{h,s,r}\right), X_1^t \right)\leq & c d n_{param}\log\left(n_{neuron}\right) \times \\
        & \log\left[dt\left(\frac{2M}{\epsilon}\right)^{dt}\log\left(dt\left(\frac{2M}{\epsilon}\right)^{dt}\right)\right].
    \end{align*}
\end{lemma}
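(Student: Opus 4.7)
The plan is to chain together Lemma~\ref{lem:1}, Theorem~\ref{thm:4.1}, and Lemma~\ref{lem:5} in sequence. First, I observe that every $f\in\pi_M(\mathcal{T}_{\mathcal{P},\mathcal{H}}^{h,s,r})$ is a sequence-to-sequence map whose logarithmic image $\log_0^c(f)$ lies coordinate-wise in $[-M,M]^{d\times t}$. Vectorising this output, the truncated class fits the hypotheses of Theorem~\ref{thm:4.1} with output dimension $K=dt$. Fixing $X_1^t\in\mathcal{X}^t$ and working with the empirical measure $\rho_t$ supported on $X_1^t$, Lemma~\ref{lem:1} yields $\mathcal{N}(\epsilon,\cdot,X_1^t)\le \mathcal{M}(\epsilon,\cdot,X_1^t)$, after which Theorem~\ref{thm:4.1} provides
\begin{align*}
\mathcal{M}\!\left(\epsilon, \pi_M(\mathcal{T}_{\mathcal{P},\mathcal{H}}^{h,s,r}), X_1^t\right)
< 2\!\left(2e\, dt\!\left(\tfrac{2M}{\epsilon}\right)^{dt}\!\log\!\left(2e\, dt\!\left(\tfrac{2M}{\epsilon}\right)^{dt}\right)\right)^{\!\ell},
\end{align*}
where $\ell := \mathcal{P}_{dim}\!\left(\pi_M(\mathcal{T}_{\mathcal{P},\mathcal{H}}^{h,s,r})\right)$.

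Second, I would take logarithms on both sides and substitute the pseudo-dimension bound from Lemma~\ref{lem:5}, namely $\ell\le c\, n_{param}\log_2(n_{neuron})$. Absorbing the additive $\log 2$ and the multiplicative constants $2e$ into a new universal constant, the resulting estimate reads
\begin{align*}
\log\mathcal{N}\!\left(\epsilon, \pi_M(\mathcal{T}_{\mathcal{P},\mathcal{H}}^{h,s,r}), X_1^t\right)
\le c\, n_{param}\log(n_{neuron})\,\log\!\left(dt\!\left(\tfrac{2M}{\epsilon}\right)^{dt}\!\log\!\left(dt\!\left(\tfrac{2M}{\epsilon}\right)^{dt}\right)\right).
\end{align*}
Because the right-hand side does not depend on the particular $X_1^t$, taking the supremum over $X_1^t\in\mathcal{X}^t$ on the left incurs no loss. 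The extra factor of $d$ appearing in the stated bound is a harmless over-estimate that can be absorbed into the universal constant, or alternatively traced through a slightly looser invocation of Sauer's lemma in the course of Theorem~\ref{thm:4.1}.

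The main obstacle is the combinatorial bookkeeping required to apply Theorem~\ref{thm:4.1} with $K=dt$: one must justify that $\pi_M$ truncates $\log_0^c(f)$ coordinate-wise across every entry of the $d\times t$ output matrix, so that the vectorised output legitimately belongs to $[-M,M]^{dt}$, and one must reconcile the empirical measure on $\mathcal{X}^t$ with the ambient probability measure appearing in the definition of $\mathcal{P}_{dim}$. Once this identification is pinned down, the remainder of the argument is a mechanical composition of the three preceding results and calls for no new geometric or probabilistic ingredient.
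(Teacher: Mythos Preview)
Your proposal is correct and follows essentially the same route as the paper: chain Lemma~\ref{lem:1} to pass from covering to packing numbers, apply Theorem~\ref{thm:4.1} with $K=dt$ to bound the packing number in terms of the pseudo-dimension, and then invoke Lemma~\ref{lem:5} for the pseudo-dimension bound before taking the supremum over $X_1^t$. Your remark that the extra factor of $d$ in the statement is a harmless over-estimate is apt; the paper simply writes it into the final constant after combining the three inequalities with Remark~\ref{rem:5.2}.
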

\begin{proof}
    For a sequence of $t$ tokens $\{X_i\}_{i=1}^t\in\dd^d$, we can write from Lemma \ref{lem:1} 
    \begin{align}\label{eqn:9.1}
        \mathcal{N}\left(\epsilon, \pi_M\left(\mathcal{T}_{\mathcal{P},\mathcal{H}}^{h,s,r}\right), X_1^t \right)\leq \mathcal{M}\left(\epsilon, \pi_M\left(\mathcal{T}_{\mathcal{P},\mathcal{H}}^{h,s,r}\right), X_1^t\right). 
    \end{align}
    By writing $K=d\times t$, Theorem \ref{thm:4.1} enables us to write
    \begin{align}\label{eqn:9.2}
        \log\left(\mathcal{M}\left(\epsilon, \pi_M\left(\mathcal{T}_{\mathcal{P},\mathcal{H}}^{h,s,r}\right), X_1^m\right)\right)\leq c_1 \mathcal{P}_{dim}\left(\pi_M\left(\mathcal{T}_{\mathcal{P},\mathcal{H}}^{h,s,r}\right)\right)\log\left[dt\left(\frac{2M}{\epsilon}\right)^{dt}\log\left(dt\left(\frac{2M}{\epsilon}\right)^{dt}\right)\right].
    \end{align}
    But from Lemma \ref{lem:5} we have
    \begin{align}\label{eqn:9.3}
        \mathcal{P}_{dim}\left(\pi_M(\mathcal{T}_{\mathcal{P},\mathcal{H}}^{h,s,r})\right) \leq c (4sdh+2dr+d+r+dv) \log\left(h(3(d+s)+d)+2d+r\right). 
    \end{align}
    Finally combining Inequalities \ref{eqn:9.1}, \ref{eqn:9.2} \ref{eqn:9.3}, and Remark \ref{rem:5.2} we get
    \begin{align*}
        \log \sup_{X_1^t\in\mathcal{X}^t}\mathcal{N}\left(\epsilon, \pi_M\left(\mathcal{T}_{\mathcal{P},\mathcal{H}}^{h,s,r}\right), X_1^t \right)\leq & C_1 d n_{param} \log\left(n_{neuron}\right) \times \\
        & \log\left[dt\left(\frac{2M}{\epsilon}\right)^{dt}\log\left(dt\left(\frac{2M}{\epsilon}\right)^{dt}\right)\right].
    \end{align*}
\end{proof}

\subsection{\textbf{Finite Sample Error Bound of the Class HyT}}
Our primary objective is to demonstrate that the empirical error sequences corresponding to finite and bounded samples converge to the universal error constructed through empirical risk minimization, as described in \ref{eqn:emp_risk_min}. To do that, we will provide an upper bound on the difference between the truncated empirical errors and the truncated form of generalization error. Similarly to \cite{lin} and \cite{ghosh}, we proceed by defining the following: 

For any $f\in\mathcal{T}_{\mathcal{P},\mathcal{H}}^{h,s,r}$, we define
\begin{align}
    \mathcal{E}_{\pi_M}(f):=\int_{\xx\times \mathcal{Y}}\|f(x)-\log_0^c(y_M)\|^2d\rho.
\end{align}
 and 
 \begin{align}
     \mathcal{E}_{\pi_{M},D}(f):=\frac{1}{t}\sum_{i=1}^t\|f(x_i)-\log_0^c(y_{i,M})\|^2,
 \end{align}
 where $\pi_M(z):=z_M:=\min\{M,|z|\}\cdot\textrm{sign}(z)$, for any $z\in\rr$ and for $z$ in higher dimensional spaces, the truncation is done component-wise. 

 As in \cite{ghosh}, we have to use a version of concentration inequality, which is a slightly more generalized version of Lemma 17 \cite{ghosh}, which is in turn a version of Lemma 11.4 \cite{gyor}. 

 \begin{lemma}\label{lem:5.2.4}
     We consider $\|y\|\leq B$ and $B\geq \frac{1}{\sqrt{c}}$. For a class of functions $f\in T$ from $\dd^{d\times t}\to\rr^{d\times t}$ satisfying $\|f(x)\|\leq B$, we have 
     \begin{align}
         & \mathbb{P}[\exists f\in \mathcal{T} : \epsilon(f)-\epsilon(f_\rho)-(\epsilon_D(f)-\epsilon_D(f_\rho))\geq \epsilon(\alpha+\beta+\epsilon(f)-\epsilon(f_\rho))] \\
        &\leq 14 \sup_{x_1^m\in \xx^m} \mathcal{N}_1\left(\frac{\beta\epsilon}{20B},\mathcal{T},x_1^m\right)\exp\left(-\frac{\epsilon^2(1-\epsilon)\alpha m}{214(1+\epsilon)B^4}\right), 
     \end{align}
     for all $m\geq 1$ and $\alpha, \beta>0$ and $\epsilon\in(0,1/2)$. 
 \end{lemma}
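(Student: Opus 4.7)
The proof plan follows the template of Lemma 11.4 in Györfi et al.~(the scalar version of which appears as Lemma 17 in~\cite{ghosh}), with modifications needed for the vector-valued loss $\|f(X)-\log_0^c(Y_M)\|^2$ taking values in $\mathbb{R}^{d\times t}$. The argument decomposes into (a) a preliminary Bernstein-type variance estimate, (b) symmetrization by a ghost sample, (c) randomization by Rademacher signs, and (d) a covering-number argument combined with Hoeffding's inequality.

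The preliminary step is to establish
\[
\operatorname{Var}\bigl[\|f(X)-\log_0^c(Y_M)\|^2\bigr]\;\leq\; c_0 B^2\bigl(\mathcal{E}_{\pi_M}(f)-\mathcal{E}_{\pi_M}(f_\rho)\bigr)
\]
for every $f$ with $\|f(x)\|\leq B$, which follows by expanding $\|f-\log_0^c(Y_M)\|^2-\|f_\rho-\log_0^c(Y_M)\|^2$, factoring out $\|f-f_\rho\|$, and using that $f_\rho(x)=\mathbb{E}[\log_0^c(Y)\mid x]$ is the $L^2$-minimizer established in Lemma~1. The hypothesis $B\geq 1/\sqrt{c}$ enters precisely here: it guarantees that $\|\log_0^c(Y_M)\|$ is controlled by a constant multiple of $B$ on the truncated domain. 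Once this moment bound is in hand, Chebyshev's inequality applied to an independent ghost sample $D'=\{(X_i',Y_i')\}_{i=1}^m$ shows that, on the original bad event, the ghost empirical risk deviates from the population risk by at most $(\epsilon/2)(\mathcal{E}_{\pi_M}(f)-\mathcal{E}_{\pi_M}(f_\rho))$ with probability at least $1/2$; this reduces the original probability (up to a factor of $2$) to a symmetrized event involving only $\mathcal{E}_{\pi_M,D}$ and $\mathcal{E}_{\pi_M,D'}$.

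Next, by exchangeability the paired differences $L_i(f)-L_i'(f)$, where $L_i(f):=\|f(X_i)-\log_0^c(Y_{i,M})\|^2$, can be multiplied by i.i.d.\ Rademacher signs $\sigma_i\in\{\pm 1\}$ without altering the joint law. Conditioning on the $2m$-sample, I would cover $\mathcal{T}|_{X_1^m}$ in the empirical $L^1$-norm at radius $\beta\epsilon/(20B)$; since $L(\cdot,x,y)$ is $O(B)$-Lipschitz in $f(x)$ on the truncated domain, this cover induces one for the loss class at a comparable radius. A union bound over the cover followed by Hoeffding's inequality applied to each Rademacher sum $\tfrac{1}{m}\sum_i\sigma_i\bigl(L_i'(f_j)-L_i(f_j)-L_i'(f_\rho)+L_i(f_\rho)\bigr)$ — whose summands are bounded by $O(B^2)$ and whose deviation target is of order $\epsilon\alpha$ — yields the claimed exponential factor $\exp\!\bigl(-\epsilon^2(1-\epsilon)\alpha m/(214(1+\epsilon)B^4)\bigr)$, with the constant $14$ absorbing the factor of $2$ from the Chebyshev step together with additional factors $\leq 7$ appearing in the standard template.

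The principal technical obstacle is the variance-to-mean bound in the first step: this is the only place where the multiplicative dependence on $\mathcal{E}(f)-\mathcal{E}(f_\rho)$ (rather than an additive constant) in the final inequality is generated, and it must be verified carefully for the vector-valued squared-norm loss. Because $\log_0^c$ is an isometry from the tangent space at $0$ into $\mathbb{R}^{d\times t}$ with the Euclidean norm, the scalar-case expansion transfers essentially verbatim; the only delicate point is tracking how $B\geq 1/\sqrt{c}$ interacts with the truncation $\pi_M$, so that $\|f(X)-\log_0^c(Y_M)\|^2$ remains uniformly bounded (by $O(B^2)$) and admits the Bernstein moment condition needed by Chebyshev. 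Everything downstream is bookkeeping on constants.
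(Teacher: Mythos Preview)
Your plan is the standard and correct route to this concentration inequality, following the template of Lemma~11.4 in Gy\"orfi et al.\ (the reference the paper itself cites). Note, however, that the paper does not actually prove this lemma: the \texttt{proof} environment immediately following the statement is an \emph{application} of the lemma, not a derivation of it. It opens by substituting $\alpha=\beta=1$ and $\epsilon=t^{-\theta}$ \emph{into the very inequality being stated}, then combines the result with the covering-number estimate from the previous subsection to conclude that the truncated empirical error converges to the truncated population error almost surely. The closing sentence ``completing the proof of Lemma~\ref{lem:5.2.4}'' is a mislabel; what is actually established in that block is the finite-sample convergence step needed for Theorem~\ref{thm:main}. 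The concentration inequality itself is treated as a known result, imported from \cite{gyor} and Lemma~17 of \cite{ghosh}. So your sketch --- variance-to-mean bound, ghost-sample symmetrization via Chebyshev, covering plus Hoeffding --- would in fact supply the argument the paper omits, and it matches exactly what those references do.
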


\begin{proof}
    
 Because $(\pi_Mf_{D}^{h,s,r}), y_M,y_{i,M}\in 
\left[-\frac{1}{\sqrt{c}}\tanh^{-1}(M\sqrt{c}),\frac{1}{\sqrt{c}}\tanh^{-1}(M\sqrt{c})\right]^{dt}$,
    \begin{equation*}
        \left\|\mathcal{E}_{\pi_M}\left(\pi_M f_{D}^{h,s,r}\right)-\mathcal{E}_{\pi_{M},D}\left(\pi_M f_D^{h,s,r}\right)\right\|\leq 8d\left(\frac{1}{\sqrt{c}}\tanh^{-1}(M\sqrt{c})\right)^2.
    \end{equation*}
    
   By putting $\alpha=\beta=1$ and $\epsilon=t^{-\theta}$ in Lemma \ref{lem:5.2.4} we get,
   \begin{equation*}
       \left(\mathcal{E}_{\pi_M}\left(\pi_M f_{D}^{h,s,r}\right)-\mathcal{E}_{\pi_M}(f_\rho)\right)-\left(\mathcal{E}_{\pi_{M},D}(\pi_M f_{D}^{h,s,r})-\mathcal{E}_{\pi_{M},D}(\pi_{M}f_\rho)\right)\leq 8d\left(\frac{1}{\sqrt{c}}\tanh^{-1}(M\sqrt{c})\right)^2t^{-\theta}
   \end{equation*}
   holds with probability at most 
   \begin{equation*}
       14\sup_{x_1^t} \mathcal{N}_1\left(\left(\frac{t^{-\theta}}{2M}\right)^K\frac{1}{20}, \pi_B\mathcal{T},x_1^t\right)\exp\left(-\frac{t^{1-2d\theta}}{214\times (3M^4)^d}\right).
   \end{equation*}
   We will now use Lemma \ref{lem:5.2.4} to derive a complete bound on the Covering Number of the space $\pi_M(\mathcal{G}_L)$, given as follows:
   \begin{align*}
       & \sup_{x_1^t} \mathcal{N}_1\left(\left(\frac{t^{-\theta}}{2M}\right)^d\frac{1}{20}, \pi_M\mathcal{T},x_1^t\right)\exp\left(-\frac{t^{1-2d\theta}}{214\times (3M^4)^d}\right) \\
       & \leq \exp\left[A \log(B)\right] \times  \exp\left(-\frac{t^{1-2d\theta}}{214\times (3M^4)^d}\right),
   \end{align*}
   where 
   \begin{align*}
       A&:= C_1 (4sdh+2dr+d+r+dv) \log\left(h(3(d+s)+d)+2d+r\right)\\
       B&:=dt\left(2M^\prime t^\theta\right)^{dt}\log\left(dt\left(2M^\prime t^\theta\right)^{dt}\right)\\
       M^\prime&:=\frac{1}{\sqrt{c}}\tanh^{-1}(M\sqrt{c})
   \end{align*}
   By the statement of Theorem \ref{thm:main} and $(h,s,r)=(2,1,4)$ we get
   \begin{align*}
       \lim_{t\to\infty} \sup_{x_1^t} \mathcal{N}_1\left(\left(\frac{t^{-\theta}}{2M}\right)^d\frac{1}{20}, \pi_M\mathcal{T},x_1^t\right)\exp\left(-\frac{t^{1-2d\theta}}{214\times (3M^4)^d}\right)=0.
   \end{align*}
   Combining this with the Strong Law of Large Numbers, we get
   \begin{align*}
       \mathcal{E}_{\pi_{M_t}}(\pi_{M_t}f_{D}^{2,1,4})-\mathcal{E}_{\pi_{M_t},D}(\pi_{M_t}f_{D}^{2,1,4})=0
   \end{align*}
   holds almost surely, completing the proof of Lemma \ref{lem:5.2.4}. 
\end{proof}

\begin{rem}\label{rem:error_rate}
    \textbf{Sample Complexity of Transformers:} Lemma \ref{lem:5.2.4} (see Appendix) shows that for any finite truncation parameter $M$, the empirical error converges to the universal error at rate $\mathcal{O}(t^{-\theta})$ for some $\theta \in (0, 1/2d)$, with probability at least
\[
1 - C^\ast P \log(Q) \exp\left(-\frac{t^{1 - 2d\theta}}{214 \cdot (3M^4)^d}\right),
\]
where $P$ and $Q$ are as in Theorem \ref{thm:main}, and $C^\ast > 0$ is a constant. Thus, by the second Borel–Cantelli Lemma, $\mathcal{E}_D(f)$ converges almost surely to $\mathcal{E}(f_\rho)$ as $t \to \infty$. This implies that the rate of convergence of the error rate of $\mathcal{T}_{\mathcal{P},\mathcal{H}}^{2,1,4}$ is $\mathcal{O}(t^{-1/2d})$, where $t$ and $d$ are the number of input tokens and the input embedding dimensions, respectively. 

\end{rem}

Our final module to derive the universal consistency is the universal approximation properties of transformers, as discussed in \cite{approx_tran}. We can derive a similar approximation property for the generalized transformer architecture, which we prove in Lemma \ref{lem:univ_approx}.

\subsection{\textbf{Transformers as Universal Approximators}}

\begin{lemma}\label{lem:univ_approx}
    For any given $\epsilon>0$ and a function $g:\dd^{d\times t}\to\rr^{d\times t}$, there exists a HyT network $f\in\mathcal{T}_{\mathcal{P},\mathcal{H}}^{2,1,4}$ such that
    \begin{align}
        d_2(f,g):=\left[\int_{X}\|f(x)-g(x)\|^2_{L^2}dx\right]^{1/2}<\epsilon.
    \end{align}
\end{lemma}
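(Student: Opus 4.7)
The overall plan is to reduce the hyperbolic approximation problem to the Euclidean universal approximation theorem for transformers of Yun et al. \cite{approx_tran}, exploiting the fact that on the compact input set $\xx \subset \dd^{d\times t}$ (which lies strictly inside the Poincar\'{e} ball of radius $1/\sqrt{c}$), both $\log_0^c$ and $\exp_0^c$ are smooth diffeomorphisms with finite bi-Lipschitz constants. This lets us trade the Riemannian geometry for a Euclidean one up to multiplicative constants depending on $c$ and the radius of the set.

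Assuming without loss of generality that $g$ is continuous (the general $L^2$ case follows by a density argument), I would first pull $g$ back to a continuous map $\tilde{g}: \log_0^c(\xx) \to \rr^{d\times t}$ defined by $\tilde{g}(z) := g(\exp_0^c(z))$, which is a sequence-to-sequence function on the compact Euclidean domain $\log_0^c(\xx)$. An application of Theorem~3 of \cite{approx_tran} with $h=2,\,s=1,\,r=4$ then furnishes a standard positional-encoded Euclidean transformer $\tilde{f}$ satisfying $\|\tilde{f}-\tilde{g}\|_{L^2(\log_0^c(\xx))} < \delta$ for any prescribed $\delta>0$. The HyT candidate $f\in\mathcal{T}_{\mathcal{P},\mathcal{H}}^{2,1,4}$ is then assembled by transplanting the weight matrices $U_v^j, U_k^j, U_q^j, U_f^j, U_1, U_2$, biases $l_1, l_2$, and positional encoding $E$ of $\tilde f$ into the HyT template defined in Equations \ref{eqn:attn}--\ref{eqn:ff}, possibly after rescaling them to absorb the Jacobian factors of $\log_0^c$ and $\exp_0^c$ at the base point.

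Converting the Euclidean approximation bound into a bound on $d_2(f,g)$ requires tracking how the M\"{o}bius-sandwich operations in HypAttn and HypFF relate to the corresponding Euclidean operations on $\log_0^c(X)$. Because the outer $\log_0^c$ in HypFF returns the output to Euclidean space, and the Jacobians of $\exp_0^c$ and $\log_0^c$ are uniformly bounded on the relevant compact sets, the error $\|f-g\|_{L^2}$ would be controlled by $\delta$ times a finite constant depending only on $c$ and on the radius of $\xx$; taking $\delta$ small enough yields $d_2(f,g)<\epsilon$.

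The main obstacle is the M\"{o}bius residual $X \oplus_c \exp_0^c[\cdot]$ in HypAttn and HypFF, which differs from the Euclidean residual $X + \cdot$ used in Yun et al.'s architecture. A Taylor expansion of the M\"{o}bius addition gives $u\oplus_c v = u + v + O(c\|u\|\|v\|)$ uniformly on bounded subsets of the ball, so the two residuals agree to leading order and the deviation is smooth and $c$-dependent. The delicate step is absorbing this curvature-induced perturbation into the weight matrices produced by Yun et al. without violating the structural template of $\mathcal{T}_{\mathcal{P},\mathcal{H}}^{2,1,4}$, likely by invoking continuity of the approximating transformer's output with respect to its parameters and solving an implicit equation for the adjusted weights. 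An auxiliary lemma quantifying the uniform discrepancy between nested M\"{o}bius compositions and their Euclidean affine counterparts on the compact input set will most likely be needed to close the argument.
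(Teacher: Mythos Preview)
Your overall strategy matches the paper's: pull the problem back to Euclidean coordinates via $\exp_0^c/\log_0^c$, invoke Yun et al.'s Theorem~3 to obtain a Euclidean transformer approximant, then push forward to the hyperbolic setting. The paper, however, executes this far more tersely than you propose. It sets $h:=g\circ\exp_0^c$, applies Yun et al.\ to get $t\in\mathcal{T}_{\mathcal{P}}^{2,1,4}$ with $\|h-t\|_{L^2}<\epsilon/c$ (where $c=\sup_{x\in\mathcal{R}}\|\log_0^c(x)\|$ on the compact domain), defines $f:=t\circ\log_0^c$, and then simply \emph{asserts} that $f\in\mathcal{T}_{\mathcal{P},\mathcal{H}}^{2,1,4}$. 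The error bound is a one-line change of variables $x\mapsto y=\log_0^c(x)$, yielding $d_2(f,g)\leq c\,\|h-t\|_{L^2}<\epsilon$.

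In particular, the paper does not carry out the weight transplantation, Jacobian rescaling, Taylor expansion of $\oplus_c$, or the auxiliary lemma on M\"{o}bius-versus-Euclidean compositions that you anticipate needing. The obstacle you identify---that the M\"{o}bius residual $X\oplus_c\exp_0^c[\cdot]$ is not the Euclidean residual, so a HyT block is not literally $\exp_0^c\circ(\text{Euclidean block})\circ\log_0^c$---is real, and your instinct to address it is sound; the paper simply treats the membership $t\circ\log_0^c\in\mathcal{T}_{\mathcal{P},\mathcal{H}}^{2,1,4}$ as immediate. So your route is more cautious and would yield a more self-contained argument, at the cost of substantial extra work that the paper's proof does not undertake.
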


\begin{proof}
    This lemma is a direct consequence of Theorem 3 \cite{yun2019transformers}. We define $h:=g\circ\exp_0^c$. Then $h$ is a continuous map from $\rr^{d\times t}\to\rr^{d\times t}$. Now $g$ is compactly supported and $\exp_0^c:\rr^{d\times t}\to\dd^{d\times t}$ is a global diffemorphism. So, $h$ is a continuous map from $\rr^{d\times t}\to\rr^{d\times t}$ with compact support. By Theorem 3 \cite{yun2019transformers}, we know that there exists a $t\in\mathcal{T}_\mathcal{P}^{2,1,4}$ such that 
    \begin{align}\label{eqn:univ_approx}
        \left[\int_{Y}\|h(y)-t(y)\|^2_{L^2}dy\right]^{1/2}<\frac{\epsilon}{c},
    \end{align}
    where $c=\sup_{x\in\mathcal{R}\subsetneq\dd^{d\times t}}\|\log_0^c(x)\|$. We have $c<\infty$, since $\mathcal{R}$ is a compact subset of $\dd^{d\times t}$ and $\log_0^c$ is a global diffeomorphism on $\mathcal{R}$. 
We will now construct the target function $f$ from $t$. Define $f:=t\circ\log_0^c$. Then $f\in\mathcal{T}_{\mathcal{P},\mathcal{H}}^{2,1,4}$. We also note that
\begin{align*}
    d_2(f,g) & = \left[\int_X\|f(x)-g(x)\|^2_{L^2}dx\right]^{1/2}\\
    &= \left[\int_X\|h\circ\log_0^c(x)-t\circ\log_0^c(x)\|^2_{L^2}dx\right]^{1/2}\\
    &= \left[\int\|(h-t)(\log_0^c(x))\|^2_{L^2}dx\right]^{1/2}\\
    &\leq c\left[\int_Y\|h(y)-t(y)\|^2_{L^2}dy\right]^{1/2}\\
    &\leq \epsilon,
\end{align*}
where the last inequality is followed from Equation \ref{eqn:univ_approx}. 
\end{proof}

\subsection*{Proof of Theorem \ref{thm:main}}
Since we know that $\mathbb{E}[(\log_0^c(y))^2]<\infty$, by Lemma \ref{lem:univ_approx}, for all $\epsilon>0$ there exists $f\in\mathcal{T}_{\mathcal{P},\mathcal{H}}^{2,1,4}$ such that  $\delta_2(f,f_\rho) < \epsilon$. 
    
    By the triangle inequality, we bound the following difference into a sum of $8$ terms,
    \begin{align*}
       & \ec(\pi_M(f_{D}^{2,1,4}))-\ec(f_\rho)\\
        \leq & \epsilon \ec\left(\pi_M(f_{D}^{2,1,4})\right)-(1+\epsilon)\ec(\pi_M(f_{D}^{2,1,4}))\\
        + & (1+\epsilon)\left(\ec_{\pi_M}(\pi_M(f_{D}^{2,1,4}))\right)-\ec_{\pi_M,D}(\pi_M(f_{D}^{2,1,4}))\\
        + & (1+\epsilon)\left(\ec_{\pi_{M},D}(\pi_M(f_{D}^{2,1,4}))-\ec_{\pi_M,D}(f_{D}^{2,1,4})\right)\\
        + & (1+\epsilon)(\ec_{\pi_{M},D}(f_{D}^{2,1,4}))-(1+\epsilon)^2(\ec_{D}(f_{D}^{2,1,4}))\\
        + & (1+\epsilon)^2\left(\ec_{D}(f_{D}^{2,1,4})-\ec_D(f)\right)\\
        + &(1+\epsilon)^2\left(\ec_{D}(f)-\ec(f)\right)\\
        + & (1+\epsilon)^2\left(\ec(f)-\ec(f_\rho)\right)+\left((1+\epsilon)^2-1\right)\ec(f_\rho)\\
        =: & \sum_{i=1}^8A_i,
    \end{align*}
where each of the $A_i$'s represents a term in the summation.  We will show that under the conditions of Theorem \ref{thm:main}, each of the $A_i\to0$. 
   We note that, for each $q,w,\epsilon>0$,
    \begin{equation} \label{ineq:9.1}
        (q+w)^2\leq(1+\epsilon)q^2+(1+1/\epsilon)w^2.
    \end{equation}  

We begin by showing that $A_1\to0$ as $t\to\infty$. 
    \begin{align*}
        A_1 & = \epsilon\ec\left(\pi_M(f_{D}^{2,1,4})\right)-(1+\epsilon)\ec(\pi_M(f_{D}^{2,1,4}))\\
        & = \int_{\mathcal{Z}}\|\pi_{M}(f_{D}^{2,1,4}(x))-(\log_0^c(y_M))\\
        &+(\log_0^c(y_M))-(\log_0^c(y))\|^2d\rho\\
        & - (1+\epsilon)\int_{\mathcal{Z}}\|\pi_M(f_{D}^{2,1,4})-(\log_0^c(y_M))\|^2d\rho\\
        & \leq (1+(1/\epsilon))\int_{\mathcal{Z}}|\log_0^c(y)-\log_0^c(y_M)|^2d\rho. \hspace{2ex} \text{[By (\ref{ineq:9.1})]}
    \end{align*}
    Since $\epsilon>0$ is arbitrary, by condition (i) in Theorem \ref{thm:main}, we have $A_1\to0$ as $t\to\infty$. Next, combining Lemma \ref{lem:univ_approx} and the conditions of Theorem \ref{thm:main}, we get $A_2\to 0 \hspace{1ex} \text{as} \hspace{1ex} t\to\infty$. The definition of the truncation operator results in 
   \begin{align*}
       A_3=&\frac{1}{t}\sum_{i=1}^t\|\pi_M(f_{D}^{2,1,4}(X_i))-(\log_0^c(Y_{i,M}))\|^2\\
       &-\frac{1}{t}\sum_{i=1}^t\|f_{D}^{2,1,4}(X_i)-(\log_0^c(Y_{i,M}))\|^2\leq 0.
   \end{align*}

   Using the Strong Law of Large Numbers and (\ref{ineq:9.1}), we deduce
   \begin{align*}
       A_4 \leq & (1+\epsilon)(1+1/\epsilon)\frac{1}{t}\sum_{i=1}^t\|\log_0^c(Y_i)-\log_0^c(Y_{i,M})\|^2\\
       &\to (1+\epsilon)(1+1/\epsilon)\int_{\mathcal{Z}}\|\log_0^c(y)-\log_0^c(y_M)\|^2d\rho
   \end{align*}
   as $t\to\infty$ almost surely. And the condition (i) in Theorem \ref{thm:main} implies $A_4\to 0$ as $t\to\infty$. Since $f_{D}^{2,1,4}$ is obtained by minimizing the empirical risk, we have,
   \begin{align*}
       A_5=&(1+\epsilon)^2\left(\frac{1}{t}\sum_{i=1}^t\|f_{D}^{2,1,4}(X_i)-\log_0^c(Y_i)|^2\right)\\
       &-(1+\epsilon)^2\left(\frac{1}{t}\sum_{i=1}^t|f(X_i)-\log_0^c(Y_i)|^2\right)\leq 0.
   \end{align*}
   Once again, the strong law of large numbers directly indicates $A_6\to0$ almost surely as $t\to\infty$. We can write $A_7$ as $A_7=(1+\epsilon)^2\|f-f\rho\|^2_{L^2_{\rho_\xx}}$, which goes to $0$ by Lemma \ref{lem:univ_approx}. Finally, for $A_8$, we have
   \begin{align*}
       A_8&\leq ((1+\epsilon)^2-1)\int_{\mathcal{Z}}\|f_\rho(x)-\log_0^c(y)\|^2d\rho \\
       &= \epsilon(\epsilon+2)\int_{\mathcal{Z}}\|f_\rho(x)-\log_0^c(y)\|^2d\rho,
   \end{align*}
which definitely goes to $0$ as $\epsilon>0$ is arbitrary and the integral is bounded. This completes the proof of Theorem \ref{thm:main}.

\section{Experiments \& Results}\label{sec:7}

We will discuss the results obtained from our simulation on real-world question answering datasets such as Squad, BoolQ, etc, on a BERT-like only-encoder based Transformer Model. TO validate our theoretical claims, we will run our simulation on the same model with four different curvatures: curvature $0$ (Euclidean) and on Poincar\'{e} Balls with curvatures $0.0001, 1.0$, and $100$ respectively. The Python-based implementation is available at \href{https://github.com/sagarghosh1729/Hyperbolic_BERT.git}{https://github.com/sagarghosh1729/Hyperbolic\_BERT.git}.

\subsection{\textbf{Model Description}} 
Our model is conceptually grounded in the architecture of the standard Euclidean Transformer and its hyperbolic generalizations. Notably, the Euclidean variant emerges as a computationally efficient degenerate case corresponding to zero curvature (\(c = 0\)) within the broader family of hyperbolic models. This structural relationship facilitates the transfer of empirical observations from the Euclidean instance to its hyperbolic counterparts. Accordingly, we utilize a minimal encoder-only architecture comprising a single block, configured with the following hyperparameters: embedding dimension \(d = 128\), number of attention heads \(h = 2\), head size \(s = 1\), and feed-forward hidden dimension \(r = 4\). These choices are designed to align with the theoretical assumptions underpinning our analysis.

For practical implementation, we adopt the vocabulary, pretrained WordPiece tokenizer, and sinusoidal positional encodings from a standard BERT model. The network is trained as a regressor using the Mean Squared Error (MSE) loss function and optimized with the Adam algorithm, employing a learning rate of \(0.001\) and weight decay of \(0.01\). For each dataset and fixed curvature setting, we perform a single training run and report the Test Root Mean Squared Error (Test RMSE) as a function of training iterations.

\subsection{\textbf{Dataset Description}}
\begin{enumerate}
    \item \textbf{Squad}
    \begin{itemize}
        \item \textit{No of Samples:} $\sim 100k$
        \item \textit{Type:} Extractive QA type, answers span on a continuous block of text in the context. 
        \item \textit{Use Cases:} Popular benchmarks for training and evaluating extractive QA models like BERT, a regression setup.
    \end{itemize}
    \item \textbf{Bool QA}
     \begin{itemize}
        \item \textit{No of Samples:} $\sim 12k$
        \item \textit{Type:} Binary question answer type
        \item \textit{Use Cases:} Simple question-answer with a classification setup.
    \end{itemize}
    \item \textbf{Tweet QA}
     \begin{itemize}
        \item \textit{No of Samples:} $\sim 13k$
        \item \textit{Type:} Tweets
        \item \textit{Use Cases:} Presence of noise, more towards real-world-like data, a regression setup
    \end{itemize}
\end{enumerate}

\subsection{\textbf{Dataset Specific Model Descriptions}}
The Table \ref{tab:hyper_details_transformer} refers to the simulation-specific hyperparameters and other details. 

\begin{table*}[!ht]
\centering
\caption{The complete details of hyperparameters for three real-world datasets are presented to reproduce the results. }
\label{tab:hyper_details_transformer}
\resizebox{0.8\textwidth}{!}{
\begin{tabular}{lccccc}
\toprule
Hyperparameter Details  &  Squad QA Dataset &   Bool QA Dataset  &  Tweet QA Dataset  \\
\midrule
No of Encoders & $1$ & $1$ & $1$  & \\
Input Embedding Dimension & $128$ & $128$ & $128$ \\
Noise  & Yes & Yes & Yes  \\
Learning Rate & $0.001$ & $0.001$ & $0.001$  \\
Weight decay & $0.0005$ & $0.0005$ & $0.0005$ \\
Train/test split & $0.80$ & $0.80$ & $0.80$ \\
No of samples & $\sim 100k$ & $\sim 12k$ & $\sim 13k$ \\
Batch Size & $16$ & $16$ & $16$ \\
Max Sequence length & $512$ & $512$ & $512$ \\
$(h,s,r)$ & $(2,1,4)$ & $(2,1,4)$ & $(2,1,4)$ \\
Optimizer &Adam & Adam & Adam  \\
No of Input Tokens & $\sim12M$& $\sim0.4M$& $\sim 0.4M$ \\
\bottomrule
\end{tabular}}
\end{table*}


\begin{figure}
    \centering
    \includegraphics[width=1.0\linewidth]{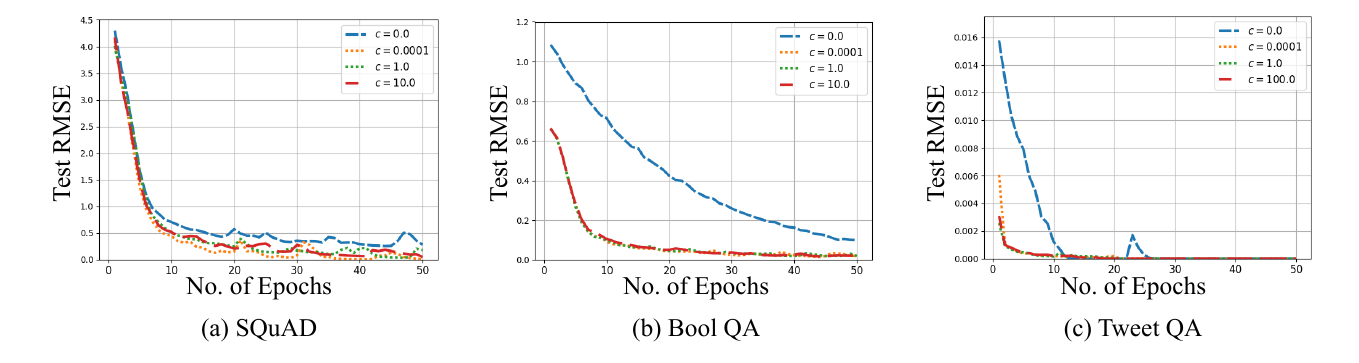}
    \caption{Test RMSE Loss over the number of training iterations of the three Question-Answering datasets shown above}
    \label{fig:bert_sim_1}
\end{figure}
\subsection{\textbf{Discussion on Simulations}}
We now present and examine the empirical results obtained from our experiments across the five datasets previously introduced. The corresponding figure provides a visual aid to support the ensuing analysis. As the sequence-to-sequence translation task has been cast as a large-scale regression problem, guided by the principle of minimizing the \(L^2\) generalization error, our primary evaluation metric is the Test Root Mean Squared Error (Test RMSE), plotted as a function of training iterations.

For each dataset, we instantiated the BERT-inspired encoder architecture with the hyperparameter configuration \((h, s, r) = (2, 1, 4)\), running the model for 50 training iterations while evaluating four distinct curvature settings: \(c = 0\), \(0.0001\), \(1.0\), and \(10.0\). The resulting Test RMSE trajectories consistently highlight the advantages conferred by the hyperbolic generalization. Interestingly, a modest decline in performance is observed at higher curvature values—an anticipated phenomenon attributable to the geometric contraction of the Poincaré disk as the curvature parameter \(c \to \infty\).

\subsection{\textbf{Analyzing the Sample Complexity of Transformers}}

In this section, we empirically substantiate the claim articulated in Remark \ref{rem:error_rate} concerning the sample complexity behavior on the TweetQA dataset. To this end, we repeated the same experiment ten times at the 20\textsuperscript{th} training epoch, recording the mean Test Root Mean Squared Error (Test RMSE) and its associated standard deviation at token counts in increments of 20,000. The results are visually summarized in Figure \ref{fig:tweet_error_rate}.

Notably, the Test RMSE trajectories, along with their standard deviation bands, consistently remain beneath the reference curve \(y = t^{-1/257}\) (up to a constant scaling factor) as the number of training tokens increases. This empirical observation lends strong support to the theoretical claim made in Remark \ref{rem:error_rate}.

\begin{figure}[!ht]
    \centering
    \includegraphics[width=1.0\linewidth]{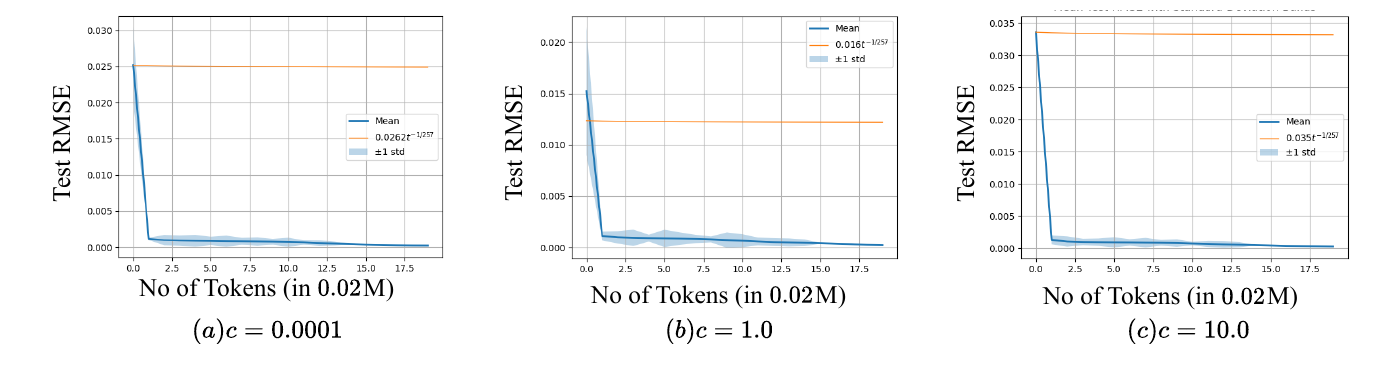}
    \caption{Test RMSE Loss over the number of training tokens (in $0.02$ Millions) of the Tweet QA dataset over the course of three different curvatures}
    \label{fig:tweet_error_rate}
\end{figure}

\section{\textbf{Conclusion and Future Works}}\label{sec:8}

\subsection*{\textbf{Conclusion}}
We have established that both vanilla Transformers and their hyperbolic counterparts exhibit universal consistency under appropriate regularity assumptions. This ensures that, given sufficient training data and sequence-to-sequence modeling, no alternative model can asymptotically outperform them in terms of generalization error.

Our results thus provide a theoretical foundation complementing the empirical success of Transformers in domains such as NLP~\cite{vaswani2017attention}, vision~\cite{dosovitskiy2020image}, and reinforcement learning~\cite{chen2021decision}. Specifically, universal consistency guarantees that, with increasing data and proper scaling, Transformers converge to the optimal predictor—a cornerstone of statistical learning theory~\cite{gyor}.

Further, our sample complexity analysis quantifies the convergence rate of empirical to generalization error as $\mathcal{O}(t^{-1/2d})$, where $d$ denotes the embedding dimension and $t$ the number of tokens. This provides practical insight into data requirements for near-optimal performance, aligning with observed scaling laws.

By extending our framework to hyperbolic settings, we generalize consistency guarantees to non-Euclidean spaces, enabling effective modeling of hierarchical and graph-structured data~\cite{kaplan2020rewon}.

\subsection*{\textbf{Future Works}}
While our analysis assumes idealized settings and i.i.d. data, it lays the groundwork for exploring consistency under more realistic scenarios, including distribution shifts, adversarial perturbations, and structured outputs. Future directions include deriving tighter convergence bounds, extending to non-i.i.d. regimes, and characterizing generalization in downstream tasks.

In essence, our findings affirm that attention-based architectures are not only empirically effective but also theoretically grounded as asymptotically optimal learners.

\bibliographystyle{IEEEtran}
\bibliography{ref}

\clearpage

\end{document}